\documentclass[letterpaper]{article}
\usepackage{ijcai18}
\usepackage{times}
\usepackage{helvet}
\usepackage{courier}
\usepackage{url}
\usepackage{graphicx}
\usepackage{multibib}

\newcites{add}{Additional References}

\frenchspacing
\setlength{\pdfpagewidth}{8.5in}  
\setlength{\pdfpageheight}{11in}  
%
\renewcommand{\paragraph}[1]{\noindent\textbf{#1}}

%
%
%
\pdfinfo{
	/Title (Expectation Optimization with Probabilistic Guarantees in POMDPs 
	with 
	Discounted-sum Objectives)
	/Author (Krishnendu Chatterjee, Adrian Elgyuett, Petr Novotny, 
		Owen Rouille)
	/Keywords (Input your keywords in this optional area)
}
%
\setcounter{secnumdepth}{1}

\usepackage{microtype}
\usepackage{latexsym} 
\usepackage{theorem}
\usepackage{xspace}
\usepackage{amsmath}
\usepackage{mathtools}
\usepackage{amssymb}
\usepackage{booktabs}
\usepackage{paralist}
\usepackage{nicefrac}
\usepackage{tikz}
\usepackage{todonotes}
\usepackage{etoolbox}
\usepackage{stmaryrd}

\usepackage[ruled,vlined]{algorithm2e}

\mathchardef\mhyp="2D

\newtheorem{lemma}{Lemma}

\newtheorem{definition}{Definition}
\newtheorem{theorem}{Theorem}
\newtheorem{remark}{Remark}

\newtheorem{example}{Example}

\newenvironment{proof}{\noindent{\textit{Proof.}}}{\hfill$\square$\break}
\newenvironment{sproof}{\noindent{\textit{Proof 
(sketch).}}}{\hfill$\square$\break}
\newcommand{\varthr}{\mathit{thr}}
\newcommand{\varrboundr}{\mathit{rbound}}
\newcommand{\gpred}{\hat{g}}
\newcommand{\varh}{f}

\newcommand{\pomdp}{P}

\newcommand{\states}{S}

\newcommand{\act}{\mathcal{A}}
\newcommand{\trans}{\delta}
\newcommand{\obs}{\mathcal{Z}}
\newcommand{\reward}{r}
\newcommand{\obsmap}{\mathcal{O}}
\newcommand{\distr}{\mathcal{D}}
\newcommand{\initd}{\lambda}

\newcommand{\supp}{\mathrm{Supp}}

\newcommand{\reals}{\mathbb{R}}

\newcommand{\belief}{b}
\newcommand{\discsum}{\mathsf{Disc}}
\newcommand{\discount}{\gamma}

\newcommand{\obsfunc}{\obsmap}
\newcommand{\hist}{h}
\newcommand{\ob}{o}

\newcommand{\histfunc}{H}

\newcommand{\E}{\mathbb{E}}
\newcommand{\evalue}{\mathit{eVal}}

\newcommand{\thr}{\tau}
\newcommand{\Rset}{\mathbb{R}}
\newcommand{\Nset}{\mathbb{N}}

\newcommand{\len}[1]{\mathit{len}(#1)}
\newcommand{\belseq}[2]{#1_{#2}}
\newcommand{\discpath}[2]{\def\EmptyTest{#1}\ifdefempty{\EmptyTest}{\discsum_{#2}}{\discsum_{#2}(#1)}}

\newcommand{\risk}{\mathit{rl}}
\newcommand{\probm}{\mathbb{P}}
\newcommand{\rbound}{\alpha}
\newcommand{\eps}{\varepsilon}

\newcommand{\riskvalue}[2]{\mathit{rVal}(#1,#2)}

\newcommand{\accpayoff}[1]{\mathsf{Payoff}_{#1}}

\newcommand{\node}{n}
\newcommand{\tree}{\mathcal{T}}
\newcommand{\econst}{K}
\newcommand{\rewards}{\mathit{rew}}
\newcommand{\timeout}{\mathit{timeout}}

%

%


\newcommand{\trsafe}{\mathcal{T}_{\mathit{safe}}}
\newcommand{\trexpl}{\mathcal{T}_{\mathit{\expl}}}
\newcommand{\trsearch}{\mathcal{T}_{\mathit{\search}}}
\newcommand{\hroot}{h_{\mathit{root}}}
\newcommand{\treeclos}{\hat{\tree}_{\expl}}

\usepackage{nicefrac}
\usepackage{tikz}
\sloppy
\usetikzlibrary{fit}
\usetikzlibrary{backgrounds}
\usetikzlibrary{automata}
\usetikzlibrary{shapes}
\usetikzlibrary{matrix}
\usetikzlibrary{fit}
\usetikzlibrary{calc}
\usetikzlibrary{positioning}
\usetikzlibrary{intersections}

\tikzstyle{Player1}=[circle, thick, minimum size=0.6cm, inner 
sep=0cm,draw=black]
\tikzstyle{State}=[circle, font = \small, thick, minimum size=0.6cm, inner 
sep=0cm,draw=black,fill=white]
\tikzstyle{Final}=[circle, accepting, thick, minimum size=0.6cm, inner 
sep=0cm,draw=black]
\tikzstyle{RState}=[circle, very thick, minimum size=0.8cm, inner 
sep=0cm,draw=red]
\tikzstyle{tran}=[draw,->,font=\small]
\tikzstyle{obstyle}=[rounded corners,fill=gray!20]

\title{Expectation Optimization with Probabilistic Guarantees in POMDPs\\ with 
Discounted-sum Objectives}

\author{Krishnendu Chatterjee$^1$, Adri\'an Elgy\"utt$^1$, Petr Novotn\'y$^1$, 
Owen 
Rouill\'e$^2$ \\
$^1$ Institute of Science and Technology Austria, Klosterneuburg, Austria \\
$\{$Krishnendu.Chatterjee, adrian.elgyuett, petr.novotny$\}$@ist.ac.at \\
$^2$ \'Ecole Normale Sup\'erieure de Rennes, Rennes, France\\
owen.rouille@ens-rennes.fr
}
\begin{document}

\maketitle

\begin{abstract}
	Partially-observable Markov decision processes (POMDPs) with discounted-sum 
	payoff
	are a standard framework to model a wide range of problems related to 
	decision making 
	under uncertainty. 
	Traditionally, the goal has been to obtain policies that optimize the 
	expectation of 
	the discounted-sum payoff. 
	A key drawback of the expectation measure is that even low probability 
	events with 
	extreme payoff can significantly affect the expectation, and thus the 
	obtained policies
	are not necessarily risk-averse.
	An alternate approach is to optimize the probability that the payoff is 
	above a certain 
	threshold, which allows obtaining risk-averse policies, but ignores 
	optimization of
	the expectation. 
	We consider the expectation optimization with probabilistic guarantee 
	(EOPG) problem,
	where the goal is to optimize the expectation ensuring that the payoff is 
	above a 
	given threshold with at least a specified probability.
	We present several results on the EOPG problem, 
	including the first algorithm to solve it.
\end{abstract}

\section{Introduction}
\noindent{\em POMDPs and Discounted-Sum Objectives.}
Decision making under uncertainty is a fundamental problem in artificial 
intelligence.
Markov decision processes (MDPs) are the {\em de facto} model that allows both 
decision-making choices as well as stochastic 
behavior~\cite{Howard,Puterman2005}.
The extension of MDPs with uncertainty about information gives rise to the 
model of  
partially-observable Markov decision processes 
(POMDPs)~\cite{LittmanThesis,PT87}.
POMDPs are used in a wide range of areas, such as planning~\cite{RN10}, 
reinforcement learning~\cite{LearningSurvey}, 
robotics~\cite{KGFP09,kaelbling1998planning},
to name a few.
In decision making under uncertainty, the objective is to optimize a payoff 
function.
A classical and basic payoff function is the {\em discounted-sum payoff}, where 
every 
transition of the POMDP is assigned a reward, and for an infinite path (that 
consists of
an infinite sequence of transitions) the payoff is the discounted-sum of the 
rewards 
of the transitions.

\smallskip\noindent{\em Expectation Optimization and Drawback.}
Traditionally, POMDPs with discounted-sum payoff have been studied, where the 
goal is to obtain policies that optimize the expected payoff. 
A key drawback of the expectation optimization is that it is not robust 
with respect to risk measures. 
For example, a policy $\sigma_1$ that achieves with probability 
$1/100$ payoff $10^4$ and with the remaining probability payoff~0 has higher 
expectation 
than a policy $\sigma_2$ that achieves with probability $99/100$ payoff~$100$ 
and 
with the remaining probability payoff~0. 
However the second policy is more robust and less risk-prone, and is desirable 
in 
many scenarios. 

\smallskip\noindent{\em Probability Optimization and Drawback.}
Due to the drawback of expectation optimization, there has been 
recent interest to study the optimization of the probability to 
ensure that the payoff is above a given threshold~\cite{HYV16:risk-pomdps}.
While this ensures risk-averse policies, it ignores the expectation optimization. 

\smallskip\noindent{\em Expectation Optimization with Probabilistic Guarantee.}
A formulation that retains the advantages of both the above optimization 
criteria,
yet removes the associated drawbacks, is as follows: given a payoff threshold 
$\thr$
and risk  bound $\rbound$, the objective is the expectation 
maximization
w.r.t. to all policies that ensure the payoff is at least $\thr$ with 
probability
at least $1-\rbound$.
We study this expectation optimization with probabilistic guarantee (EOPG) 
problem
for discounted-sum POMDPs. 

\smallskip\noindent{\em Motivating Examples.} We present some 
motivating examples for the EOPG formulation.
\begin{compactitem}
	\item {\em Bad events avoidance.} 
	Consider planning under uncertainty (e.g., self-driving cars) where certain 
	events are dangerous (e.g., the distance between two cars less than a 
	specified distance), and it must be ensured that such events happen with 
	low probability.
	Thus, desirable policies aim to maximize the expected payoff, ensuring the 
	avoidance of 
	bad events with a specified high probability.
	
	\item {\em Gambling.} In gambling, while the goal is to maximize the 
	expected profit,
	a desirable risk-averse policy would ensure that the loss is less than a 
	specified amount with high probability (say, with probability $0.9$).
\end{compactitem}
Thus, the EOPG problem for POMDPs with discounted-sum payoff is an important 
problem
which we consider.

\smallskip\noindent{\em Previous Results.} 
Several related problems have been considered, and two most relevant 
works
are the following:
\begin{compactenum}
	\item {\em Chance-constrained (CC) problem.}
	In the CC problem, 
	 certain bad states of the POMDP must not be reached with some 
	probability 
	threshold.
	That is, in the CC problem the probabilistic constraint is state-based 
	(some states
	should be avoided) rather than the execution-based discounted-sum 
	payoff.
	This problem was considered in~\cite{STW16:BWC-POMDP-state-safety}, but 
	only with deterministic policies. As already noted 
	in~\cite{STW16:BWC-POMDP-state-safety}, randomized (or mixed) policies are 
	more powerful.
	
	\item {\em Probability~1 bound.} The special case of the EOPG problem with 
	$\rbound=0$ has been considered in~\cite{CNPRZ17:BWC-POMDP}.
	This formulation represents the case
	with no risk. 
\end{compactenum}

\smallskip\noindent{\em Our Contributions.}
Our main contributions are as follows:
\begin{compactenum}
	\item {\em Algorithm.} 
	We present a randomized algorithm for approximating (up to any 
	given precision) the optimal solution to the EOPG problem.
	This is the first approach to solve the EOPG problem 
	for discounted-sum POMDPs.
	
	\item {\em Practical approach.} 
	We present a practical approach where certain 
	searches
	of our algorithm are only performed for a time bound. 
	This gives an {\em anytime} algorithm which approximates the probabilistic
	guarantee and then optimizes the expectation.
	
	\item {\em Experimental results.} 
	We present experimental results of our algorithm on classical POMDPs. 
\end{compactenum}
Due to space constraints, details such as full proofs are deferred to the 
appendix.

\smallskip\noindent \textbf{Related Works.}
POMDPs with discounted-sum payoff have been widely studied,
both for theoretical results~\cite{PT87,LittmanThesis} 
as well as practical tools~\cite{khl08,SV:POMCP,YSHL17:despot-jair}.
Traditionally, expectation optimization has been considered, and recent
works consider policies that optimize probabilities to 
ensure discounted-sum payoff above a threshold~\cite{HYV16:risk-pomdps}. 
Several problems related to the EOPG problem have been considered 
before: 
(a)~for probability threshold~1 for long-run average and stochastic shortest 
path 
problem in fully-observable MDPs~\cite{bfrr14,rrs15}; 
(b)~for risk bound~0 for discounted-sum payoff for POMDPs~\cite{CNPRZ17:BWC-POMDP}; and
(c)~for general probability threshold for long-run average payoff in 
fully-observable MDPs~\cite{ckk15}. 
The \emph{chance constrained}-optimization was studied 
in~\cite{STW16:BWC-POMDP-state-safety}. 
The general EOPG problem for POMDPs with discounted-sum payoff has not been 
studied before,
although development of similar 
objectives was proposed for perfectly observable MDPs~\cite{DefEW:08:risk-mdp}.
A related approach for POMDPs is called
\emph{constrained 
POMDPs}~\cite{UH10:constrained-pomdp-online,PMPKGB15:constrained-POMDP}, 
where the aim is to maximize the expected payoff ensuring that the expectation 
of some other quantity is bounded.
In contrast, in the EOPG problem the constraint is probabilistic rather than an 
expectation 
constraint, and as mentioned before, the probabilistic constraint ensures
risk-averseness
as compared to the expectation constraint. 
Thus, the constrained POMDPs and the EOPG problem, though related, consider 
different optimization criteria.

\section{Preliminaries}
\label{sec:prelims}
Throughout this work, we mostly follow standard (PO)MDP notations 
from~\cite{Puterman2005,LittmanThesis}.
%
We denote by $\distr(X)$ the set of all probability distributions on a finite 
set $X$, i.e. all functions $f: X \rightarrow [0,1]$ s.t. $\sum_{x\in 
	X}f(x)=1$. 

\begin{definition}\textbf{(POMDPs.)}
	A \emph{POMDP} is a
	tuple $\pomdp=(\states,\act,\trans,\reward,\obs,\obsmap,\initd)$ 
	where
	$\states$ is a finite set of \emph{states},
	$\act$ is a finite alphabet of \emph{actions},
	$\trans:\states\times\act \rightarrow \distr(\states)$ is a 
	\emph{probabilistic transition function} that given a state $s$ and an
	action $a \in \act$ gives the probability distribution over the successor 
	states, 
	$\reward: \states \times \act \rightarrow \reals$  is a reward 
	function,
	$\obs$ is a finite set of \emph{observations},
	$\obsmap:\states\rightarrow \distr(\obs)$ is a probabilistic 
	\emph{observation function} that 
	maps every state to a distribution over observations, and 
	$\initd\in \distr(\states)$ is the \emph{initial belief}.
	We abbreviate $\trans(s,a)(s')$ and $\obsfunc(s)(o)$ by 
	$\trans(s'|s,a)$ and $\obsfunc(o|s)$, respectively. 
\end{definition}

\noindent \textbf{Plays \& Histories.}
A \emph{play} (or an infinite path) in a POMDP is an infinite sequence $\rho = 
s_0 a_1 s_1 a_1 s_2 a_2 \ldots$ 
of states and actions s.t.
$s_0 \in \supp(\initd)$ and
for all $i \geq 0$ 
we have $\trans(s_{i+1}\mid s_{i-1},a_i)>0$. A \emph{finite path} (or just 
\emph{path}) is a finite prefix of a 
play ending with a state, i.e. a sequence from $(\states\cdot\act)^*\cdot 
\states$. 
%
A~\emph{history} is 
a finite sequence of actions and observations 
$\hist=a_1 \ob_1 \dots a_{i-1} \ob_i\in (\act\cdot\obs)^*$ 
s.t. there is a path $w=s_0 a_1 s_1 \dots a_{i} s_i$ with 
$\obsfunc(o_j\mid s_j)>0$ 
for each $1\leq j \leq i$.
We write $\hist=\histfunc(w)$ to indicate that history 
$\hist$ 
corresponds to a path $w$. The \emph{length} of a path (or history) $w$,
denoted by $\len{w}$, 
is the number of actions in $w$, and the length of a play $\rho$ is 
$\len{\rho}=\infty$.

\smallskip\noindent\textbf{Discounted Payoff.}
Given a play $\rho =s_0 a_1 s_1 a_2 s_2 a_3 \ldots$ and a discount 
factor $0 
\leq \discount < 1$, the \emph{finite-horizon discounted payoff} of $\rho$ for 
horizon $N$ is $
\discpath{\rho}{\discount,N} = \sum_{i=0}^{N} 
\discount^{i}\cdot\reward(s_i,a_{i+1}).$
The \emph{infinite-horizon discounted payoff} 
$\discsum_\discount$ of $\rho$ is
$\textstyle
\discpath{\rho}{\discount} = \sum_{i=0}^{\infty} 
\discount^{i}\cdot\reward(s_i,a_{i+1}).
$

\smallskip\noindent \textbf{Policies.}
A \emph{policy} (or \emph{strategy}) is a blueprint for selecting actions based on 
the past history. 
Formally, it 
is a function $\sigma$ 
which assigns to a history a probability distribution 
over the actions, i.e. $\sigma(\hist)(a)$ is the probability of selecting 
action $a$ after observing history $\hist$ (we abbreviate 
$\sigma(\hist)(a)$ to 
$\sigma(a\mid\hist)$). A policy is \emph{deterministic} if for each history 
$\hist$ the distribution $\sigma(\cdot\mid \hist)$ selects a single action with 
probability 1. For $\sigma$ deterministic we write $\sigma(\hist)=a$ to 
indicate that $\sigma(a\mid\hist)=1$.

\noindent\textbf{Beliefs.}
A \emph{belief} is a distribution on states (i.e. an element of 
$\distr(\states)$)
indicating the probability of being in each particular state given the current
history. The initial belief $\initd$ is given as a part of the POMDP.
Then, in each step, when the history observed so far is $h$, the current belief
is $\belseq{\belief}{h}$, an action $a\in \act$ is played, and an observation
$o\in \obs$ is received, the updated belief $\belseq{\belief}{h'}$ for history
$h'=hao$ can be computed by a standard Bayesian formula~\cite{cassandra1998exact}.

\noindent \textbf{Expected Value of a Policy.} Given a POMDP $\pomdp$, a policy 
$\sigma$, a horizon $N$, and a discount factor $\discount$, 
the \emph{expected value}
of $\sigma$ from $\initd$ is the expected 
value of the 
infinite-horizon discounted sum 
under policy $\sigma$ when starting in a state sampled from the initial belief 
of $\initd$ of $\pomdp$:
$
\evalue(\sigma) = \E_\initd^{\sigma}[\discpath{}{\discount,N}].
$

\noindent \textbf{Risk.} A \emph{risk level} 
$\risk(\sigma,\thr,\discpath{}{\discount,N})$ of a policy 
$\sigma$ at  
threshold 
$\thr\in\Rset$ w.r.t. payoff function $\discpath{}{\discount,N}$ is 
the probability that the payoff of a play 
generated by $\sigma$ is below $\thr$, i.e. 
\[
\risk(\sigma,\thr,\discpath{}{\discount,N}) = 
\probm^{\sigma}_{\initd}(\discpath{}{\discount,N}< \thr).
\]

\noindent \textbf{EOPG Problem.} We now define the problem of \emph{expectation 
optimization with probabilistic guarantees} (the \emph{EOPG} problem for short). We first define a finite-horizon variant, and then discuss the infinite-horizon version in Section 3. 
In EOPG problem, we are given a threshold $\thr\in\Rset$, a 
risk bound $\rbound$, and a horizon $N$. A policy $\sigma$ is a \emph{feasible 
solution} of the problem if $\risk(\sigma,\thr,\discpath{}{\discount,N})\leq 
\rbound$. The goal of the EOPG problem is to find a feasible solution $\sigma$ 
maximizing $\evalue(\sigma)$ among all feasible solutions, provided that 
feasible solutions exist.

\paragraph{Observable Rewards.} We solve the EOPG problem under the assumption that 
rewards in the POMDP are \emph{observable}. This means that 
$\reward(s,a)=\reward(s',a)$ whenever 
$\obsfunc(s)=\obsfunc(s')$ or if both $s$ and $s'$ have a positive probability 
under the initial belief. This is a natural assumption satisfied by many 
standard benchmarks~\cite{HYV16:risk-pomdps,CCGK15}. At the end of 
Section~\ref{sec:algo} we discuss how could be our results extended to 
unobservable rewards.

\noindent\textbf{Efficient Algorithms.} A standard way of making POMDP planning 
more efficient is to 
design an algorithm that is \emph{online} (i.e., it computes a local 
approximation of the $\eps$-optimal policy, selecting the 
best action for the current belief~\cite{RPPC08:online-planning-pomdp}) and 
\emph{anytime,} i.e. computing better 
and 
better approximation of the $\eps$-optimal policy over its runtime, returning a 
solution together with some guarantee on its quality if forced to terminate 
early.

\section{Relationship to CC-POMDPs}
We present our first result showing that an approximate infinite-horizon (IH) 
EOPG problem can be reduced to a finite-horizon variant.  While similar 
reductions are natural when dealing with discounted payoff, for the 
EOPG problem the reduction is somewhat subtle due to the presence of the risk 
constraint. We then show that the  EOPG problem can be reduced to 
chance-constrained POMDPs 
and solved using the RAO* algorithm~\cite{STW16:BWC-POMDP-state-safety}, but 
we also present several drawbacks of this approach.

Formally, we define the IH-EOPG problem as follows: we are given $\thr$ and 
$\rbound$ as before and in addition, an error term $\eps$. We say that an 
algorithm $\eps$-solves the IH-EOPG problem if, whenever the problem has a 
feasible solution (feasibility is defined as before, with 
$\discpath{}{\discount,N}$ replaced by $\discpath{}{\discount}$), the algorithm 
finds a policy $\sigma$ s.t. 
$\risk(\sigma,\thr-\eps,\discpath{}{\discount})\leq \rbound$ and 
$\E_\initd^{\sigma}[\discpath{}{\discount}]\geq \riskvalue{\thr}{\rbound} - 
\eps$, where 
$$\riskvalue{\thr}{\rbound}=\sup\{\E_\initd^{\pi}[\discpath{}{\discount}]\mid 
\risk(\pi,\thr,\discpath{}{\discount})\leq 
\rbound\}.$$

\paragraph{Infinite to Finite Horizon.} 
Let $\pomdp$ be a $\discount$-discounted POMDP, $\thr$ a payoff threshold, 
$\rbound\in[0,1]$ a risk bound, and $\eps>0$ an error term.
Let 
$N{(\eps)}$ be a horizon such that the following holds: 
$\discount^{N{(\eps)}}\cdot|\max\{0,\reward_{\max}\}-\min\{0,\reward_{\min}\}|\leq
(1-\discount)\cdot\frac{\eps}{2}$, where 
$\reward_{\max}$ and $\reward_{\min}$ are the maximal and minimal rewards 
appearing 
in~$\pomdp$, respectively.

\begin{lemma}
	\label{lem:finhon}
If there exists a feasible solution of the IH-EOPG problem with threshold $\thr$ and risk bound $\rbound$. Then there exists a policy $\sigma$ satisfying $\risk(\sigma,\thr-\frac{\eps}{2},\discpath{}{\discount,N(\eps)})\leq \rbound$.
Moreover, let $\sigma$ be an optimal solution to the EOPG problem with the risk bound $\rbound$, horizon $N(\eps)$, and with threshold $\thr' = \thr-\frac{\eps}{2}$.
Then 
$\sigma$ is an $\eps$-optimal solution to the IH-EOPG problem.
\end{lemma}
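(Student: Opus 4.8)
The plan is to base the whole argument on one elementary path-wise estimate and then push it through the two parts of the statement. For every play $\rho$ the discarded tail $\discpath{\rho}{\discount}-\discpath{\rho}{\discount,N(\eps)}=\sum_{i>N(\eps)}\discount^{i}\reward(s_i,a_{i+1})$ is squeezed between $\discount^{N(\eps)+1}\min\{0,\reward_{\min}\}/(1-\discount)$ and $\discount^{N(\eps)+1}\max\{0,\reward_{\max}\}/(1-\discount)$, where $\reward_{\max},\reward_{\min}$ are the extreme rewards in $\pomdp$; since $\discount^{N(\eps)+1}\le\discount^{N(\eps)}$, the defining inequality of $N(\eps)$ gives $|\discpath{\rho}{\discount}-\discpath{\rho}{\discount,N(\eps)}|\le\eps/2$ for \emph{every} $\rho$. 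I would then record the three consequences I need: (i) for any policy $\pi$, $|\E_\initd^{\pi}[\discpath{}{\discount}]-\E_\initd^{\pi}[\discpath{}{\discount,N(\eps)}]|\le\eps/2$ by monotonicity of expectation; (ii) $\{\rho:\discpath{\rho}{\discount,N(\eps)}<\thr-\eps/2\}\subseteq\{\rho:\discpath{\rho}{\discount}<\thr\}$; and (iii) $\{\rho:\discpath{\rho}{\discount}<\thr-\eps\}\subseteq\{\rho:\discpath{\rho}{\discount,N(\eps)}<\thr-\eps/2\}$, both inclusions being immediate from the $\pm\eps/2$ estimate.

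For the first assertion I would take any IH-feasible policy $\pi$ (so $\risk(\pi,\thr,\discpath{}{\discount})\le\rbound$) and apply (ii): $\risk(\pi,\thr-\eps/2,\discpath{}{\discount,N(\eps)})=\probm_\initd^{\pi}(\discpath{}{\discount,N(\eps)}<\thr-\eps/2)\le\probm_\initd^{\pi}(\discpath{}{\discount}<\thr)\le\rbound$, so $\sigma=\pi$ witnesses the claim. In particular the finite-horizon EOPG instance with threshold $\thr'=\thr-\eps/2$, risk bound $\rbound$ and horizon $N(\eps)$ is feasible, so an optimal $\sigma$ exists. To show such $\sigma$ $\eps$-solves IH-EOPG I verify the two required conditions. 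The risk condition is immediate from (iii): finite-horizon feasibility of $\sigma$ means $\probm_\initd^{\sigma}(\discpath{}{\discount,N(\eps)}<\thr-\eps/2)\le\rbound$, and (iii) upgrades this to $\risk(\sigma,\thr-\eps,\discpath{}{\discount})\le\rbound$.

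The value condition $\E_\initd^{\sigma}[\discpath{}{\discount}]\ge\riskvalue{\thr}{\rbound}-\eps$ is the step where the risk constraint makes the truncation non-routine, and I expect it to be the main (mild) obstacle: I must argue that finite-horizon optimality of $\sigma$ in fact dominates \emph{every} IH-feasible competitor, not merely the finite-horizon-feasible ones. The observation is that (ii) shows every $\pi$ with $\risk(\pi,\thr,\discpath{}{\discount})\le\rbound$ is feasible for the finite-horizon instance, hence $\E_\initd^{\pi}[\discpath{}{\discount,N(\eps)}]\le\E_\initd^{\sigma}[\discpath{}{\discount,N(\eps)}]$; chaining with (i) twice gives $\E_\initd^{\pi}[\discpath{}{\discount}]\le\E_\initd^{\pi}[\discpath{}{\discount,N(\eps)}]+\eps/2\le\E_\initd^{\sigma}[\discpath{}{\discount,N(\eps)}]+\eps/2\le\E_\initd^{\sigma}[\discpath{}{\discount}]+\eps$, and taking the supremum over such $\pi$ yields $\riskvalue{\thr}{\rbound}\le\E_\initd^{\sigma}[\discpath{}{\discount}]+\eps$. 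Apart from this point, everything is bookkeeping; the one thing I would be careful about is keeping the three thresholds $\thr$, $\thr-\eps/2$, $\thr-\eps$ consistently aligned with the horizon being used, since a sign slip there is exactly the "subtlety" flagged before the lemma.
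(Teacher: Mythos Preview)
Your proposal is correct and follows essentially the same route as the paper: both arguments rest on the same path-wise tail bound $|\discpath{\rho}{\discount}-\discpath{\rho}{\discount,N(\eps)}|\le\eps/2$, derive the same two set inclusions (your (ii) and (iii)) to transfer the risk constraints, and use the observation that every IH-feasible policy is finite-horizon feasible to compare expected values. The only cosmetic difference is that the paper phrases the value comparison as a proof by contradiction while you take the supremum directly; your chaining of (i) twice to get the $\eps$ gap is in fact slightly cleaner than the paper's write-up.
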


The previous lemma effectively shows that to solve an approximate version of 
the EOPG problem, it suffices to solve its finite 
horizon version.

\paragraph{Chance-Constrained POMDPs.} In 
the chance constrained (CC) optimization problem~\cite{STW16:BWC-POMDP-state-safety}, we are given a POMDP $\pomdp$, a finite-horizon bound $N\in 
\Nset$, and a set of constraint-violating states $X$, which is a subset of the 
set 
of states of $\pomdp$. We are also given a \emph{risk bound} $\Delta$. The goal is to optimize the expected 
finite-horizon 
payoff, i.e. the expectation of the following random variable:
\[\textstyle
\accpayoff{N}(s_0a_1s_1a_2s_2\ldots) = \sum_{i=0}^{N} \reward(s_i,a_{i+1}).
\]
The optimization is subject to a constraint that the probability of entering a 
state from $C$ (so-called \emph{execution risk}) stays below the risk bound 
$\Delta$. 

\paragraph{From EOPGs to CC-POMDPs.} 
We sketch how the FH-EOPG relates to CC-POMDP optimization. In the EOPG problem, the 
constraint violation occurs when the finite-horizon discounted payoff in step 
$N$ is smaller than a threshold $\thr$. To formulate this in a 
CC-POMDP setting, we need to make the constraint violation a property of a 
\emph{state} of the POMDP. 
Hence, we construct a new POMDP $\pomdp'$ with an 
extended state space: the states of $\pomdp'$ are triples of the form 
$\tilde{t}=(s,i,x)$, where $s$ is a state of the original POMDP $\pomdp$, 
$0\leq i \leq N(\eps)$ is a time index, and $x \in \Rset$ is a number 
representing the discounted reward accumulated before reaching the state 
$\tilde{t}$. The remaining  components of $\pomdp'$ are then extended in a 
natural way from $\pomdp$. By solving the CC-POMDP problem for $\pomdp'$, where 
the set $X$ contains extended states $(s,N,x)$ where $x<\thr$, 
we obtain a policy in $\pomdp'$ which can be carried back to $\pomdp$ where it 
forms an optimal solution of the FH-EOPG problem.

\paragraph{Discussion of the CC-POMDP Approach.} It follows that we could, in 
principle, 
reduce the EOPG problem to CC-POMDP optimization and then solve 
the latter using the known RAO* algorithm~\cite{STW16:BWC-POMDP-state-safety}. However, there are several issues 
with this approach.

First, RAO* aims to find an optimal 
\emph{deterministic} policy in CC-POMDPs. 
But as already mentioned in~\cite{STW16:BWC-POMDP-state-safety}, the 
optimal solution to the CC-POMDP (and thus also to EOPG) problem might require 
randomization, and deterministic policies may have arbitrarily worse expected 
payoff than 
randomized ones (it is well-known that randomization might be necessary for 
optimality in constrained (PO)MDPs, 
see~\cite{FS95:CMDP,KimLKP:11:point-based-constrained-POMDP,SprKT:14:path-constrained-MDPs}).
 
Second, although RAO* converges to an optimal constrained deterministic policy, 
it \emph{does not} provide \emph{anytime} 
guarantees about the risk of the policy it constructs. RAO* is an AO*-like 
algorithm that iteratively searches the belief space and in 
each step computes a \emph{greedy policy} that is optimal on the already 
explored fragment of the belief space. 
During its 
execution, RAO* works with an \emph{under-approximation} of a risk taken by the 
greedy policy: this is because an optimal risk to be taken in belief 
states that were not yet explored is under-approximated by an 
\emph{admissible heuristic}. So if the algorithm is stopped 
prematurely, the actual risk taken by the current greedy policy can be much 
larger than indicated by the algorithm. This is illustrated in the following 
example.

\begin{example}\label{examp1}
Consider the MDP in Figure~\ref{fig:noupbound}, and consider $\thr = 1$ and 
$\rbound=\frac{2}{3}$; that is, in the chance-constrained reformulation, we 
seek an optimal policy for which the probability of hitting $x$ and $y$ is at 
most 
$\frac{2}{3}$. Consider the execution of RAO* which explores states $s$, $t$, $v$, $w$, and $x$ (since the MDP is 
perfectly observable, we work directly with states) and then is prematurely terminated. 
(The order in which unexplored nodes are visited is determined by a heuristic, 
and in general we cannot guarantee that $u$ is explored earlier in RAO*'s 
execution). At this moment, the risk taken by an optimal 
policy from $u$ is under-approximated using an admissible heuristic. In case of 
using a \emph{myopic} heuristic, as suggested in~\cite{STW16:BWC-POMDP-state-safety}, the risk from $u$ 
is under-approximated by $0$. Hence, at this moment the best greedy policy satisfying the risk constraint is the one which plays action $b$ in state $t$: the risk-estimate of this policy is $\frac{1}{4}<\frac{2}{3}$ (the probability of reaching $x$). However, any deterministic policy that selects $b$ in state $t$ takes 
an overall risk $\frac{3}{4}>\frac{2}{3}$. 

\end{example}
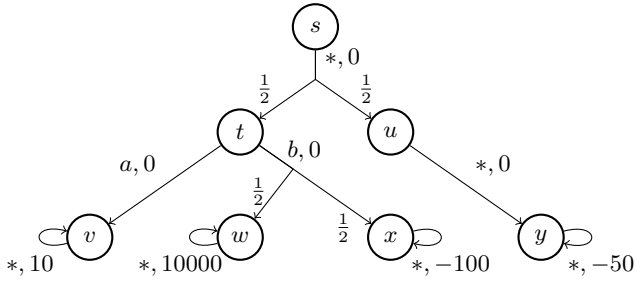
\begin{figure}
\centering
\begin{tikzpicture}[y=1.4cm]
\node[State] (s) at (0,0) {$s$};
\node[State] (t) at (-1,-1) {$t$};
\node[State] (u) at (1,-1) {$u$};
\node[State] (v) at (-3,-2) {$v$};
\node[State] (w) at (-1,-2) {$w$};
\node[State] (x) at (1,-2) {$x$};
\node[State] (y) at (3,-2) {$y$};

\path[tran] (s) -- node[pos=0.3,right] {$*,0$} +(0,-0.5) -- 
node[pos=0.7,label={[yshift=-0.2cm,xshift=0.2cm] 
135:{$\frac{1}{2}$}}] 
{} 
(t);
\path[tran] (s) --  +(0,-0.5) -- node[pos=0.7,label={[yshift=-0.2cm,xshift=-0.2cm] 
	45:{$\frac{1}{2}$}}] 
{} (u);
\path[tran] (u) -- node[auto] {$*,0$} (y);
\path[tran] (t) -- node[auto,swap] {$a,0$} (v);
\path[tran] (t) -- node[pos=0.25,label={[yshift=-0.2cm,xshift=-0.25cm] 
	45:{$b,0$}}] {} node[pos=0.3,coordinate,name=help] {} 
	node[pos=0.75,below]{$\frac{1}{2}$} (x);
\path[tran] (t) -- (help) --  node[left,pos=0.45] {$\frac{1}{2}$} (w);

\draw[tran, loop right] (x) to node[label={[yshift=0cm] 
	-90:{$*,-100$}}] {} (x);
\draw[tran, loop left] (v) to node[label={[yshift=0cm] 
	-90:{$*,10$}}] {} (v);

\draw[tran, loop left] (w) to node[label={[yshift=0cm] 
	-90:{$*,10000$}}] {} (x=w);
\draw[tran, loop right] (y) to node[label={[yshift=0cm] 
	-90:{$*,-50$}}] {} (y);
\end{tikzpicture}
\caption{A perfectly observable MDP with actions $a$, $b$ and with a discount 
factor 
$\discount=\frac{1}{2}$. The star denotes arbitrary action, and immediate 
reward of an action in a state is given next to the action label.}
\label{fig:noupbound}

\end{figure}

\newcommand{\nodepay}{V}
\newcommand{\noderisk}{r}
\newcommand{\noderiskbound}{B}
\newcommand{\nodeact}{a}

\section{Risk-Aware POMCP}
\label{sec:algo}
The previous example illustrates the main 
challenge in designing an online and anytime algorithm for the (finite 
horizon) EOPG problem: we need to keep upper bounds on the minimal risk 
achievable in the POMDP. Initially, the upper bound is $1$, and to decrease it, 
we need to discover a sufficiently large probability mass of paths that yield 
payoff above $\thr$. 

We propose an algorithm for the EOPG problem based on the popular 
POMCP~\cite{SV:POMCP}
planning algorithm: the
\emph{risk-aware POMCP} (or RAMCP for 
short). RAMCP solves the aforementioned challenge by performing, in each 
decision step, a large number of 
simulations using the POMCP heuristic to explore promising histories 
first. The 
key 
feature of RAMCP is that it extends POMCP with a new data 
structure, so called \emph{explicit tree}, which contains those histories 
explored during simulations that have payoff above the required threshold 
$\thr$. The explicit tree allows us to keep track of the upper bound on the 
risk that needs to be taken from the initial belief. After 
the simulation phase concludes, RAMCP uses the explicit tree to construct a 
perfectly observable tree-shaped \emph{constrained MDP}~\cite{Altman:book} encoding the 
EOPG problem on the explored fragment of the history tree of the input POMDP. 
The 
optimal 
distribution on actions is then 
computed using a linear program for constrained MDP 
optimization~\cite{Altman:book}. In 
the rest of this section, we present details of the algorithm and formally 
state its properties. 
In the following, we fix a POMDP $\pomdp$, a horizon $N$, a threshold 
$\thr$ 
and a risk bound $\rbound$.

\newcommand{\search}{\mathit{srch}}
\newcommand{\expl}{\mathit{exp}}
\newcommand{\prob}{p}

\begin{algorithm}[t]
	\DontPrintSemicolon
	\SetKwProg{proc}{procedure}{}{}
	\SetKwFunction{simulate}{Simulate}
	\SetKwFunction{update}{UpdateTrees}
	\SetKwFunction{rollout}{Rollout}
	\SetKwFunction{explore}{Explore}
	\SetKwFunction{select}{SelectAction}
	\SetKwFunction{play}{PlayAction}
	\SetKw{kwglobal}{global}
	
	
	\kwglobal 
	$\varthr,\varrboundr,n,\mathcal{T}_{\search},\mathcal{T}_{\expl}$\\
	\nl $\varthr\leftarrow \thr$;
	 $\varrboundr \leftarrow \rbound$;
	 $n\leftarrow N$;\\
	\nl $\mathcal{T}_{\search}\leftarrow\mathcal{T}_{\expl}\leftarrow 
	\mathit{empty~history }~\epsilon$ \\
	\nl initialize $\epsilon.U=1$\\
	\nl\While{$n>0$}{
		\nl \explore{$n$}\\
		\nl \select{}\\
		\nl $n\leftarrow n-1$
	}

\proc{\explore{$n$}}{ 
	\nl$h\leftarrow$ the root of $\mathcal{T}_{\search}$;
	$s\leftarrow$ sample from $b_h$\;
	\nl\lWhile{not timeout}{
			\simulate{$s,h,n,0$}
	}
}
	\caption{RAMCP.}
	\label{algo:ramcp}
\vspace{-0.25em}
\end{algorithm}

\paragraph{RAMCP.}
The main loop of RAMCP is pictured in 
Algorithm~\ref{algo:ramcp}. 
In each decision step, RAMCP performs a \emph{search} phase followed by 
\emph{action selection} followed by \emph{playing} the selected action (the 
latter two performed within $\mathtt{SelectAction}$ procedure). 
We describe the 
three 
phases separately.

\paragraph{RAMCP: Search Phase.} The search phase is shown in 
Algorithms~\ref{algo:ramcp} and \ref{algo:search2}. In the following, we first introduce the data structures the algorithm works with, then the elements of these structures, and finally we sketch how the search phase executes. \newcommand{\depth}{\mathit{depth}}
\newcommand{\pf}{\mathit{pay}}
\begin{algorithm}[!t]
	\DontPrintSemicolon
	\SetKwProg{proc}{procedure}{}{}
	\SetKwFunction{simulate}{Simulate}
	\SetKwFunction{update}{UpdateTrees}
	\SetKwFunction{rollout}{Rollout}
	\SetKwFunction{explore}{Explore}
	
	\setcounter{AlgoLine}{0}

	\proc{\simulate{$s,h,\depth,\pf$}}{
		\nl\If{$\depth = 0$}{
			\nl	\lIf{$\pf\geq \varthr$}{\update{$h$}}
			
			\nl	\Return{$0$}
		}
		\nl\If{$h\not \in \mathcal{T}_{\search}$}{
			\nl	add $h$ to $\mathcal{T}_{\search}$\\
			\nl	\Return{\rollout{$s,h,\depth,\pf$}}
		}
		\nl\label{alg:line1}$a \leftarrow \arg\max_{a} h.V_a + \econst\cdot 
		\sqrt{\frac{\log h.N}{h.N_a}}$\\
		\nl sample $(s',o,r)$ from $(\delta(\cdot\mid s,a),\obsfunc(\cdot\mid 
		s'),\reward(s,a))$\\
		\nl$R \leftarrow r + \discount\cdot$\simulate{$s',hao,\depth-1,\pf + 
		r$}\\
		\vspace{1mm}
		\nl$(h.N,h.N_a) \leftarrow (h.N+1,h.N_a + 1)$\\
		\nl$h.V_a \leftarrow h.V_a + \frac{R-h.V_a}{h.N_a}$\\
		\nl\Return{R}
	}
	\vspace{1mm}
	\setcounter{AlgoLine}{0}
	\proc{\rollout{$s,h,\depth,\pf$}}{
		\nl\If{$\depth = 0$}{
			\nl	\lIf{$\pf\geq \varthr$}{\update{$h$}}
			
			\nl\Return{$0$}
		}
		\nl choose $a\in \act$ uniformly at random\\
		\nl sample $(s',o,r)$ from $(\delta(\cdot\mid s,a),\obsfunc(\cdot\mid 
		s'),\reward(s,a))$\\
		\nl \Return{$r+\discount\cdot$\rollout{$s',hao,\depth-1,\pf + r$}}
	}
	\vspace{1mm}
	
	\setcounter{AlgoLine}{0}
	\proc{\update{$h$}}{
		\nl$h_{\mathit{old}}\leftarrow$ shortest prefix of $h$ not in 
		$\mathcal{T}_{\expl}$\\

		\nl\For{$g$ prefix of $h$ and strict extension of $h_{\mathit{old}}$}{
			\nl	add $g$ to $\mathcal{T}_{\expl}$\\
			\nl	let $g=\gpred ao$; add edge $(\gpred,g)$ to 
			$\mathcal{T}_{\expl}$\\
			\nl compute $\prob(\gpred,g)$ and $\rewards(\gpred,g)$ \label{algoline:bu}
		}
		\nl		$g\leftarrow h$; $R\leftarrow 0$\\
\nl		${h}.U\leftarrow 0$\\
		\nl\label{alg:line2}\Repeat{ $g=\text{empty history}$}{
			\nl	let $g=\gpred ao$ \label{algoline:dp1}\\
			\nl\ForEach{$a\in \act$}{
			\nl $\gpred.U_a \leftarrow 1-\displaystyle\sum_{o \in \obs} 
			\prob(\gpred,\gpred ao)\cdot 
			(1-{\gpred}ao.U)$}
			\nl $\gpred.U \leftarrow \min_{a\in \act}\gpred.U_a$ \label{algoline:dp2}\\
			\nl		\If{$g\not \in \mathcal{T}_{\search}$}{
				\nl		add $g$ to $\mathcal{T}_{\search}$\\
				\nl		$g.N\leftarrow 1$; $g.N_a\leftarrow 1$; $R \leftarrow R 
				+ \rewards(\gpred,g)$\\
				\nl		$g.V_a \leftarrow R$
			}
			\nl\label{alg:line3}	$g\leftarrow \gpred$
		
	}
}
	\caption{RAMCP simulations.}
	\label{algo:search2}
\vspace{-0.25em}
\end{algorithm}
\newcommand{\mdp}{\mathcal{M}}

\emph{Data Structures.} In the search phase, RAMCP 
explores, by performing simulations, the \emph{history tree} 
$\mathcal{T}_{\pomdp}$ of the input POMDP 
$\pomdp$. Nodes of the tree are the histories of $\pomdp$ of length 
$\leq N$. The tree is rooted in 
the 
empty history, and for each history $h$ of length at most $N-1$, each action 
$a$, and observation $o$, the node $h$ has a child $hao$. RAMCP works with two 
data structures, that are both sub-trees of $\mathcal{T}_{\pomdp}$: a search 
tree $\mathcal{T}_{\search}$ and explicit tree $\mathcal{T}_{\expl}.$
Intuitively, $\mathcal{T}_{\search}$ corresponds to the standard POMCP search 
tree while $\mathcal{T}_{\expl}$ is a sub-tree of $\mathcal{T}_{\search}$ 
containing histories leading to payoff above $\thr$. The term ``explicit'' 
stems from the fact that we explicitly compute beliefs and transition 
probabilities for the nodes in $\mathcal{T}_{\expl}$. Initially (before the 
first search phase), both $\mathcal{T}_{\search}$ and $\mathcal{T}_{\expl}$ 
contain a single node: empty history.

\emph{Elements of Data Structures.} Each node $h$ of 
$\mathcal{T}_{\search}$ has 
these attributes: for each action $a$ there is $h.V_a$, the average expected 
payoff obtained from the node $h$ after playing $a$ during past simulations, and $h.N_a$, the number of times action $a$ 
was selected in node $h$ in past simulations. Next, we have
$h.N$, the number of times the node was visited during past simulations. Each 
node $h$ of 
$\mathcal{T}_{\search}$ also contains a particle-filter approximation of the 
corresponding belief $b_h$. A node $h$ of the explicit tree has an attribute 
$h.U$, the upper bound on the risk from belief $b_h$, and, for each action $a$, attribute $h.U_a$, the upper bound on the risk when playing $a$ from belief $b_h$. Also, 
each node of $\mathcal{T}_{\expl}$ contains an exact representation of the 
corresponding belief, and each edge $(h,hao)$ of the explicit tree is labelled 
by numbers $\prob(h,hao) = \sum_{s,s'\in \states}b_h(s) \cdot \trans(s'\mid 
s,a)\cdot \obsfunc(o\mid s')$, i.e. by the probability of observing $o$ when 
playing action $a$ after history $h$, and $\rewards(h,hao)=\rewards(h,a)$, 
where $\rewards(h,a)$ is equal to $\reward(s,a)$ for any state $s$ with 
$b_{h}(s)>0$ (here we use the facts that rewards are observable).

 \emph{Execution of Search Phase.} Procedures $\mathtt{Simulate}$ and $\mathtt{Rollout}$ are basically 
 the same as in POMCP---within the search tree we choose actions heuristically 
 (in line~\ref{alg:line1} of $\mathtt{Simulate}$, the number $\econst$ is POMCP's 
 exploration constant), outside of it we choose actions uniformly at random. 
 However, whenever a simulation succeeds in surpassing the threshold $\thr$, we 
 add the observed history and all its prefixes to both the explicit and search 
 trees (procedure $\mathtt{UpdateTrees}$). Note that 
 computing $\prob(\gpred,g)$ on line~\ref{algoline:bu} entails computing full 
 Bayesian updates on the path from $h_{\mathit{old}}$ to $h$ so as to compute 
 exact beliefs of the corresponding nodes. Risk bounds for the 
 nodes corresponding to prefixes of $h$ are updated accordingly using a 
 standard dynamic programming update 
 (lines~\ref{algoline:dp1}--\ref{algoline:dp2}), starting from the newly added 
 leaf whose risk is $0$ (as it corresponds to a history after which $\thr$ is 
 surpassed).
 We have 
 the following:

\begin{lemma}
\label{lem:search-convergence}
\begin{compactenum}
\item At any point there exists a policy $\sigma$ such that 
$\probm^{\sigma}_{b_{h_{\mathit{root}}}}(\discpath{}{\discount,N-\len{h_{\mathit{root}}}}
< \varthr)\leq h_\mathit{root}.U$.
\item As $\mathit{timeout} \rightarrow \infty$, the probability that 
$h_\mathit{root}.U$ becomes equal to  
$\inf_{\sigma}\probm^{\sigma}_{b_{h_{\mathit{root}}}}(\discpath{}{\discount,N-\len{h_{\mathit{root}}}}
< \varthr)$ before $\timeout$ expires converges to $1$.
\end{compactenum}
\end{lemma}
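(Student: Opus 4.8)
The plan is to establish the two parts separately, both by induction on the structure of the explicit tree $\mathcal{T}_{\expl}$.

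For part~(1), the key invariant is that for every node $h$ in $\mathcal{T}_{\expl}$ there exists a policy $\sigma_h$ (defined on the subtree rooted at $h$) such that $\probm^{\sigma_h}_{b_h}(\discpath{}{\discount,N-\len{h}} < \varthr)\leq h.U$. I would prove this by bottom-up induction on $\mathcal{T}_{\expl}$, mirroring the dynamic-programming update in lines~\ref{algoline:dp1}--\ref{algoline:dp2}. For a leaf that was just added by $\mathtt{UpdateTrees}$, the accumulated payoff $\pf$ already satisfies $\pf \ge \varthr$, so the remaining discounted payoff is irrelevant: the risk is $0 = h.U$, witnessed by any policy. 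For an internal node $\gpred$, the update sets $\gpred.U_a = 1 - \sum_{o}\prob(\gpred,\gpred ao)\cdot(1 - \gpred ao.U)$, which is exactly the risk incurred by playing $a$ and then, in each resulting child $\gpred ao$ that lies in $\mathcal{T}_{\expl}$, following the inductively-obtained $\sigma_{\gpred ao}$; for observations whose child is \emph{not} in $\mathcal{T}_{\expl}$ we conservatively treat the continuation as contributing full risk~$1$, which is consistent with how $\prob$ is summed (the missing observations' probability mass is counted as risk). Since $\gpred.U = \min_a \gpred.U_a$, choosing the minimizing action $a^*$ at $\gpred$ and composing with the child policies gives the desired $\sigma_{\gpred}$. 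Applying this at $h_{\mathit{root}}$ yields the statement, after noting that histories outside $\mathcal{T}_{\search}$ are reached with payoff already determined and can be completed arbitrarily.

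For part~(2), the argument rests on the POMCP exploration guarantee: as $\timeout\to\infty$, every history of length $\le N$ reachable with positive probability is visited infinitely often, and in particular every history that achieves payoff $\ge\varthr$ is eventually discovered and, via $\mathtt{UpdateTrees}$, inserted into $\mathcal{T}_{\expl}$ together with all its prefixes and the exact transition probabilities $\prob(\cdot,\cdot)$. Once $\mathcal{T}_{\expl}$ contains \emph{all} such threshold-surpassing histories, the dynamic-programming recursion computing $h.U$ becomes exactly the Bellman recursion for the minimal risk $\inf_\sigma \probm^{\sigma}_{b_h}(\discpath{}{\discount,N-\len{h}} < \varthr)$ over the finite-horizon POMDP: the minimization over actions and the weighted sum over observations coincide with the standard characterization of the optimal (minimal) reachability-type value, and because the horizon is finite this fixed point is reached after finitely many updates. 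Hence $h_{\mathit{root}}.U$ equals the true infimum; and since discovering finitely many relevant histories is an event of probability tending to~1 as $\timeout\to\infty$ (each such history has a fixed positive probability of being hit in a single simulation round, and the number of simulations grows without bound), the claim follows by a Borel--Cantelli-style argument.

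The main obstacle is the careful bookkeeping in part~(1) concerning \emph{partially-explored} nodes: a node $\gpred$ may be in $\mathcal{T}_{\expl}$ while only some of its children $\gpred ao$ are, and one must verify that the update formula's treatment of the missing children (effectively assigning them risk~$1$) is genuinely an upper bound on the true achievable risk and does not accidentally undercount — in particular that $\sum_{o}\prob(\gpred,\gpred ao)$ taken over \emph{present} children, together with the $(1-\cdot)$ complementation, correctly lumps all unexplored observation-mass into the risk. A secondary subtlety is handling the root offset $\len{h_{\mathit{root}}}$ correctly across successive decision steps, since after each $\mathtt{SelectAction}$ the root advances and the horizon $n$ decreases; one must check that the invariant is preserved under this re-rooting, which follows because the attributes $h.U$ of the surviving subtree are not recomputed and remain valid upper bounds for the shortened horizon. \QEDB
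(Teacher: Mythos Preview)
Your overall strategy matches the paper's (structural induction on $\trexpl$ for part~(1); infinite exploration $\Rightarrow$ $\trexpl$ eventually contains all threshold-surpassing histories $\Rightarrow$ the DP recursion becomes exact for part~(2)), but the inductive invariant you state for part~(1) is wrong and the induction does not go through as written. You claim that for every $h\in\trexpl$ there is $\sigma_h$ with $\probm^{\sigma_h}_{b_h}(\discpath{}{\discount,N-\len{h}} < \varthr)\le h.U$, keeping the \emph{same} threshold $\varthr$ at every depth. Your own leaf argument exposes the problem: you say a freshly added leaf $h$ of length $N$ has risk $0$ because the \emph{accumulated} payoff $\pf$ satisfies $\pf\ge\varthr$, but under your stated invariant the relevant event is whether the \emph{remaining} payoff from $b_h$ over $N-\len{h}=0$ steps is below $\varthr$; that remaining payoff is $0$, so the probability is $1$ whenever $\varthr>0$, not $0$. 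The same mismatch breaks the inductive step, since playing $a$ at $\gpred$ earns a reward that must be subtracted (and rescaled) from the threshold handed to the children before the sum $1-\sum_o \prob(\gpred,\gpred ao)(1-\gpred ao.U)$ can be interpreted as a risk. The paper repairs this by making the threshold node-dependent: at a node $f$ with $L=\len{\hroot}$, the invariant concerns the residual threshold $(\varthr - D_f)/\discount^{\len{f}-L}$, where $D_f$ is the root-discounted reward already collected along the path from $\hroot$ to $f$. With this corrected invariant the leaf case, the DP step, and the preservation under re-rooting in \texttt{PlayAction} (where $\varthr$ is simultaneously replaced by $(\varthr-R)/\discount$) all go through exactly as you outline.

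A smaller issue in part~(2): the sentence ``each such history has a fixed positive probability of being hit in a single simulation round'' is not correct, because inside $\trsearch$ actions are selected via UCB and the selection distribution changes across rounds. The paper instead invokes the UCT exploration guarantee that every node is visited infinitely often (the bonus $\sqrt{\log N/N_a}$ eventually dominates for any under-sampled action); this is not a fixed-probability Borel--Cantelli argument, though the conclusion you need (eventual discovery of every safe length-$N$ history with probability~$1$) is the same.
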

\begin{sproof}
For part (1.) we prove the following stronger statement: Fix any point 
of algorithm's execution, and let $L$ be the length of $h_{\mathit{root}}$ (the 
history at the root of $\trsearch$ and $\trexpl$) at 
this point. Then for any node $\varh$ of $\trexpl$ there exists a policy 
$\sigma$ 
s.t. 
$\probm^{\sigma}_{b_\varh}(\discpath{}{\discount,\len{\varh}-N}
< (\varthr - 
(\discpath{\varh}{\discount,N}-\discpath{\hroot}{\discount,N})\cdot 
\gamma^{-\len{f}})/\gamma^{\len{\varh}-L})\leq 
\varh.U$. The proof
proceeds by a rather straightforward induction. The statement of the lemma then 
follows by plugging the root of 
$\mathcal{T}_{\expl}$ into $\varh$. 

For part (2.), the crucial observation is that as $\timeout\rightarrow \infty$, 
with probability converging to $1$ the tree $\trexpl$ will at some point 
contain all histories of length $N$ (that have $\hroot$ as a prefix) whose 
payoff is above the required threshold. It can be easily shown that at such a 
point $\hroot.U = 
\inf_{\sigma}\probm^{\sigma}_{b_{h_{\mathit{root}}}}(\discpath{}{\discount,N-\len{h_{\mathit{root}}}}
< \varthr)$, and $\hroot.U$ will not change any further.
%
\end{sproof}


\paragraph{RAMCP: Action Selection.}
\newcommand{\constr}{C}
The action selection phase is sketched in Algorithm~\ref{algo:select}. If the 
current risk bound is $1$, there is no real constraint and we select 
an action maximizing the expected payoff. Otherwise, to compute a distribution 
on actions to select, we construct and solve a certain constrained MDP.

\emph{Constructing Constrained MDP.}
RAMCP first computes a \emph{closure} $\treeclos$ of $\mathcal{T}_{\expl}$. That is, first we set $\treeclos \leftarrow \trexpl$ and then for 
each node $h\in \mathcal{T}_{\expl}$ and each action $a$ such that $h$ has a 
successor of the form $hao\in\trexpl$ (in such a case, we say that $a$ is 
\emph{allowed} 
in $h$), the algorithm checks if there exists a successor of the form 
$hao'$ that is not in $\mathcal{T}_{\expl}$; all such ``missing'' successors of 
$h$ under $a$ are added to $\treeclos$. Such a tree  
$\treeclos$ defines a perfectly observable \emph{constrained MDP} 
$\mdp$: 
\begin{compactitem}
	\item	
the states of $\mdp$ are the nodes of $\treeclos$; 
\item 
for each internal node $h$ of $\treeclos$ and each action $a$ allowed in $h$ there 
is probability $\prob(h,hao)$ of transitioning from $h$ to $hao$ under 
$a$ (these probabilities sum up to $1$ for each $h$ and $a$ thanks to computing 
the closure). If $h$ is a leaf of $\treeclos$ and $\len{h}<N$, playing any action $a$ in $h$ leads with probability $1$ to a new 
 sink state $\mathit{sink}$ ($\mathit{sink}$ has self-loops under all actions). 
\item 
Rewards in $\mdp$ are given by the function 
$\rewards$; self-loop on the sink and state-action pairs of the form 
$(h,a)$ with $\len{h}=N$ have reward $0$. Transitions from the other leaf nodes 
$h$ to the sink state have reward $\max_{a}h.V_a$. That is, from nodes that 
were never explored 
explicitly (and thus have $U$-attribute equal to $1$) we estimate the optimal 
payoff by previous POMCP simulations.
\item
We also have a constraint function $\constr$ assigning penalties to state-action pairs: $\constr$ 
assigns $1/\discount^{N-\len{\hroot}}$ to pairs $(h,a)$ such that $h$ is a leaf 
of $\mathcal{T}_{\expl}$ of length $N$, and 
 $0$ to all other state-action pairs. 
\end{compactitem}
 
 \emph{Solving MDP $\mdp$.}
 Using a linear programming 
formulation of constrained MDPs~\cite{Altman:book}, RAMCP computes a randomized policy 
$\pi$ in $\mdp$ maximizing $\E^{\pi}[\discpath{}{\discount}]$ under the 
constraint $\E^{\pi}[\discpath{}{\discount}^\constr] \geq 1-\varrboundr$, where $\discpath{}{\discount}^\constr$ is a discounted sum of incurred penalties. The distribution $\pi$ is then the distribution on actions used by $\pi$ in the first step. An examination of the LP in~\cite{Altman:book} shows that each solution of the LP yields not only the policy $\pi$, but also for each action $a$ allowed in the root, a 
\emph{risk vector} $d^a$, i.e. an $|\obs|$-dimensional vector such that $d^a(o)=\probm^{\pi}_{\mdp}(\discpath{}{\discount}^\constr> 0 \mid \text{$a$ is played and $o$ is received})$.

\begin{remark}[Conservative risk minimization.] 
Note that $\mdp$ might have no policy satisfying the penalty
constraint. This happens when the $U$-attribute of the root
is greater than $\varrboundr$. 
In such a case, the algorithm falls back to a policy that minimizes the risk, 
which means choosing action $a$ minimizing $\mathit{root}.U_a$ 
(line~\ref{algo:riskmin} of \texttt{SelectAction}).
In such a case, all $d^a(o)$ are set to zero, to enforce that in the following 
phases 
the algorithm behaves conservatively (i.e., keep minimizing the risk).
\end{remark}

\begin{remark}[No feasible solution.]
When our algorithm fails to obtain a feasible solution, it  ``silently'' falls 
back to a risk-minimizing policy. This might not be the preferred option for 
safety-critical applications. 
However, the algorithm exactly recognizes when it cannot guarantee meeting 
the original risk-constraint---this happens exactly when at the entry to the 
\texttt{SelectAction} procedure, the $U$-attribute in the root of $\trexpl$ is 
$>\varrboundr$.
Thus, our algorithm has two desirable properties: 
(a) it can report that it has not obtained a feasible solution; 
(b) along with that, it presents a risk-minimizing policy. 
\end{remark}
\begin{algorithm}[t]
	\DontPrintSemicolon
	\SetKwProg{proc}{procedure}{}{}
	\SetKwFunction{simulate}{Simulate}
	\SetKwFunction{update}{UpdateTrees}
	\SetKwFunction{rollout}{Rollout}
	\SetKwFunction{explore}{Explore}
	\SetKwFunction{select}{SelectAction}
	\SetKwFunction{play}{PlayAction}
	
	\proc{\select{}}{
		\nl \If{$\varrboundr<1 \wedge  \mathit{root}.U < 1$}{
			\nl$\mdp\leftarrow$ constrained MDP determined by 
			$\mathcal{T}_{\expl}$\\
			\nl\If{$\mathit{root}.U \leq \varrboundr$}{
			\nl $d_\pi,\{d^a\}_{a\in\act}\leftarrow$ solve LP formulation 
			of $\mdp$}\nl\Else{\nl$d_\pi,\{d^a\}_{a\in\act}\leftarrow$ solve risk-minimizing variant \label{algo:riskmin}
			of $\mdp$  }
			\nl$a\leftarrow$ sample from $d_{\pi}$\\
		}
	
	\nl\Else{\nl $a\leftarrow \arg\max_a \mathit{root}.V_a$\\
		\nl $d^a(o)=1$ for each $o$\\

	}
	\nl\play{$a,d^a$}\\}
	\vspace{1mm}
	\setcounter{AlgoLine}{0}
	
		\proc{\play{$a,d^a$}}{
		\nl play action $a$ and receive observation $o$ and reward $R$\\ 
		\nl $\varthr\leftarrow (\varthr - R)/\discount $;
		$\;\varrboundr\leftarrow d^a(o)$\\
		\nl $h \leftarrow$ root of $\mathcal{T}_{\search}$ (and 
		$\mathcal{T}_{\expl}$)\\
		\nl $\mathcal{T}_{\search}\leftarrow $ subtree of 
		$\mathcal{T}_{\search}$ rooted in $hao$\\
		\nl $\mathcal{T}_{\expl}\leftarrow $ subtree of $\mathcal{T}_{\expl}$ 
		rooted in $hao$\\

	}
	\caption{RAMCP: action selection and play.}
	\label{algo:select}
\vspace{-0.25em}
\end{algorithm}
\begin{figure*}[!t]
	\begin{center}
		\includegraphics[width=0.33\textwidth]{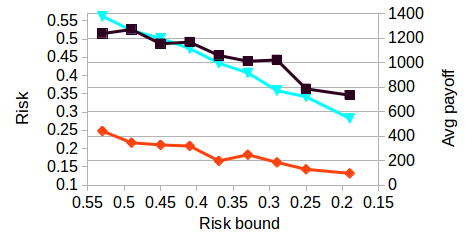}
		%
		\includegraphics[width=0.33\textwidth]{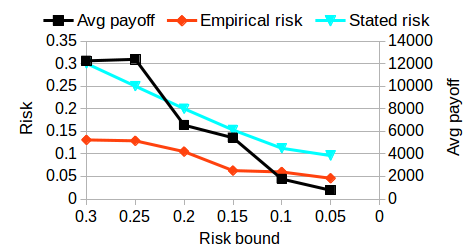}
		%
		\includegraphics[width=0.33\textwidth]{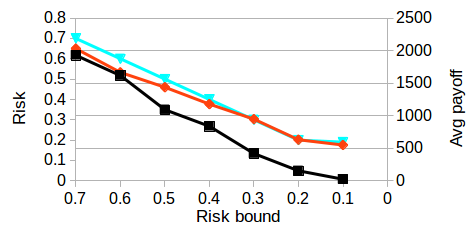}
	\end{center}
	\caption{Plots of results obtained from simulating (1.) the larger hallway POMDP benchmark (left), (2.) the MDP hallway benchmark (middle), and (3.) the smaller hallway POMDP benchmark (right). The horizontal axis represents a risk bound $\rbound$.
	}
	\label{fig:plots-benchmarks}
\vspace{-1mm}
\end{figure*}
\begin{lemma}
\label{lem:strat-convergence}
Assume that the original EOPG problem has a feasible solution. For a suitable exploration constant $\econst$, as $\mathit{timeout}\rightarrow 
\infty$, the distribution $d_\pi$ converges, with probability 1, to 
a distribution on actions used in the first step by some optimal solution to 
the EOPG problem.
\end{lemma}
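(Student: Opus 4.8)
The plan is to combine the search-phase convergence of Lemma~\ref{lem:search-convergence} with the standard convergence guarantees of POMCP/UCT and then invoke the stability of linear-programming optima. We work in the first decision step, where the root of $\trexpl$ and $\trsearch$ is the empty history, $\varthr=\thr$ and $\varrboundr=\rbound$, so the residual problem coincides with the EOPG problem itself, which is feasible by hypothesis. If $\varrboundr=1$ the constraint is vacuous and the claim reduces to convergence of $\arg\max_a\mathit{root}.V_a$ to a payoff-maximizing action, which follows directly from POMCP convergence; so we may assume $\varrboundr<1$.

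\emph{Step 1 (freeze the constrained MDP).} By (the proof of) Lemma~\ref{lem:search-convergence}, as $\timeout\to\infty$ there is, with probability $1$, a finite (random) time after which $\trexpl$ contains every length-$N$ history whose payoff is $\ge\varthr$. Since $\trexpl$ only grows, from that time on its node set --- and hence the closure $\treeclos$, the set of allowed actions, the exact beliefs, the edge probabilities $\prob(\cdot,\cdot)$, the internal rewards $\rewards$, the constraint function $C$, and the split of the leaves of $\treeclos$ into (i) above-threshold length-$N$ leaves of $\trexpl$, (ii) below-threshold length-$N$ ``missing'' leaves, and (iii) ``missing'' leaves of length $<N$ --- is fixed. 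Thus the feasible polytope of the constrained-MDP LP is eventually fixed, and only the objective keeps moving, through the leaf rewards $\max_a h.V_a$ attached to the type-(iii) leaves. Moreover, by Lemma~\ref{lem:search-convergence}, $\mathit{root}.U\to\inf_\sigma\risk(\sigma,\varthr,\discpath{}{\discount,N})\le\varrboundr<1$, so a.s. for all large enough $\timeout$ we have $\mathit{root}.U\le\varrboundr<1$, and $d_\pi$ really is produced by the LP branch of $\mathtt{SelectAction}$.

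\emph{Step 2 (identify the limit MDP with the EOPG problem).} Because rewards are observable, whether a length-$N$ history is above threshold depends only on the history; hence the type-(iii) leaves are exactly the histories from which reaching payoff $\ge\varthr$ within horizon $N$ is impossible, so from such a history the risk is $1$ regardless of the continuation and an optimal EOPG policy may, without loss of optimality, play an unconstrained payoff-optimal continuation there. By POMCP/UCT convergence --- which is where a suitable exploration constant $\econst$ is needed, cf.~\cite{SV:POMCP} --- and since every type-(iii) leaf is visited infinitely often (its parent lies in $\trexpl\subseteq\trsearch$ and is visited i.o.\ with the relevant action), $\max_a h.V_a$ converges to exactly that optimal continuation value. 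Combined with the facts that (a) by construction the discounted penalty variable $\discpath{}{\discount}^C$ is the indicator of reaching an above-threshold length-$N$ leaf of $\trexpl$, so that once the tree is frozen the constraint $\E^\pi[\discpath{}{\discount}^C]\ge 1-\varrboundr$ coincides with $\risk(\pi,\varthr,\discpath{}{\discount,N})\le\varrboundr$, and (b) the rewards $\rewards$ reproduce the accumulated discounted payoff along every path, we obtain: the optimal value of the limit MDP $\mdp^\infty$ under the penalty constraint equals the optimal value of the EOPG problem, and the first-step action distributions of optimal policies of $\mdp^\infty$ are precisely the first-step action distributions of optimal EOPG solutions. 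This identification --- in particular verifying, using observable rewards, that the conservative treatment of the type-(iii) leaves (risk $1$, value $\max_a h.V_a$) never discards the first-step distribution of some optimal EOPG solution once the tree has been frozen --- is the main obstacle of the proof.

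\emph{Step 3 (LP stability).} Along the run $d_\pi$ is read off from some optimal vertex solution of a parametric LP whose feasible polytope is eventually the fixed polytope of $\mdp^\infty$ and whose objective vector converges (Steps 1--2) to that of $\mdp^\infty$. By standard perturbation results for linear programs, the optimal-solution set of this LP converges in Hausdorff distance to that of $\mdp^\infty$, and composing with the continuous map that sends an LP solution to the induced root distribution shows that $d_\pi$ approaches the set of first-step distributions of optimal EOPG solutions; equivalently, for every $\eps>0$, with probability $1$ some optimal EOPG solution has its first-step distribution within $\eps$ of $d_\pi$ for all sufficiently large $\timeout$. The only subtlety here is that optimal LP solutions need not be unique, so convergence is to the \emph{set} of optimal root distributions rather than to a single fixed distribution.
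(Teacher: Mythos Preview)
Your proof is correct and follows essentially the same route as the paper: freeze $\trexpl$ to $\trsafe$ via Lemma~\ref{lem:search-convergence}, establish that the constrained MDP $\mdp$ then encodes the EOPG problem exactly (the feasibility correspondence plus the identification of the short leaves of $\treeclos$ as histories with unavoidable risk~$1$), and invoke POMCP/UCT convergence for the leaf estimates $h.V_a$. Your Step~3, making the passage from convergence of the LP objective to convergence of $d_\pi$ explicit via LP perturbation/stability, is in fact more careful than the paper's own argument, which stops at showing that the optimal constrained values coincide.
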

\begin{sproof}
Assuming the existence of a feasible solution, we show that at 
the time point in which the condition in Lemma~\ref{lem:search-convergence} 
(2.) holds (such an event happens with probability converging to $1$), the 
constrained MDP $\mdp$ has a feasible solution. It 
then remains to prove that the optimal constrained payoff achievable in $\mdp$ 
converges to the optimal risk-constrained payoff achievable in $\pomdp$. Since 
rewards in $\mdp$ are in correspondence with rewards in $\trexpl$, it suffices 
to show that for each leaf $h$ of $\trexpl$ with $\len{h}<N$ and for each 
action $a$ the attribute $h.V_a$ converges with probability 1 to the optimal 
expected payoff for horizon $N-\len{h}$ achievable in $\pomdp$ after playing 
action $a$ from belief $b_h$. But since the $V_a$ attributes are updated  
by POMCP simulations, this follows (for a suitable exploration constant) from 
properties of POMCP (Theorem~1 in~\cite{SV:POMCP}, see also~\cite{KS06}).
\end{sproof}

\noindent{\bf RAMCP: Playing an Action.} The action-playing phase is shown in 
Algorithm~\ref{algo:select}. An action is played in the actual POMDP $\pomdp$ and 
a new observation $o$ and reward $R$ are obtained, and $\varthr$ and 
$\varrboundr$ are 
updated. Then, both the tree data structures are pruned so that the 
node corresponding to the previous history extended by $a,o$ becomes the new 
root of the tree. After this, we proceed to the next decision step.

\begin{theorem}
\label{thm:convergence}
Assume that an EOPG problem instance with a risk bound $\rbound$ and threshold 
$\thr$ has a feasible solution. 
As $\mathit{timeout}\rightarrow\infty$, the 
probability that RAMCP returns a payoff smaller than $\thr$ converges to a number 
$\leq\rbound$. For a suitable exploration constant $\econst$, the 
expected return of a RAMCP execution converges 
to $\riskvalue{\thr}{\rbound}$. 
\end{theorem}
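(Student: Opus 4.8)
The plan is to lift the per-decision-step guarantees of Lemmas~\ref{lem:search-convergence} and~\ref{lem:strat-convergence} to the full $N$-step execution of RAMCP by induction on the number of remaining steps, tracking how the residual threshold $\varthr$ and residual risk budget $\varrboundr$ evolve along a play. The starting observation is a telescoping identity: if a play visits actions $a_1,\dots,a_N$ and collects immediate rewards $R_1,\dots,R_N$, then by an immediate induction on the update $\varthr\leftarrow(\varthr-R)/\discount$ of \texttt{PlayAction}, the value of $\varthr$ at the entry of decision step $k+1$ is $(\thr-\sum_{i=1}^{k}\discount^{i-1}R_i)/\discount^{k}$. Hence on every play the event ``RAMCP returns payoff $<\thr$'' coincides with ``the discounted sum accumulated from step $k+1$ on is $<\varthr$'', which is what lets us relate the global risk to the per-step residual budgets.

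Next I would establish the invariant: with probability tending to $1$ as $\timeout\to\infty$, at the entry of every \texttt{SelectAction} call the residual EOPG instance (belief $b_{\hroot}$, horizon $N-\len{\hroot}$, threshold $\varthr$, risk bound $\varrboundr$) is feasible and $\hroot.U\le\varrboundr$. The base case is exactly the hypothesis that the original instance is feasible, together with Lemma~\ref{lem:search-convergence}(2), which forces $\hroot.U$ to converge to the true minimal achievable risk, which is $\le\rbound$ by feasibility. For the inductive step I use the structure of the constrained MDP $\mdp$ built from $\treeclos$: in the $\timeout\to\infty$ limit, Lemma~\ref{lem:search-convergence}(2) guarantees that $\trexpl$ contains every history of length $N$ with payoff $\ge\varthr$, so the ``missing successors'' of $\treeclos$ are exactly the histories from which payoff $\ge\varthr$ is unreachable; consequently the penalty random variable $\discpath{}{\discount}^\constr$ matches the true notion of achieving the threshold, and solving the Altman LP for $\mdp$~\cite{Altman:book} faithfully solves the residual EOPG instance. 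The POMCP convergence of the $h.V_a$ attributes (used for leaves of $\treeclos$ of length $<N$, as in the sketch of Lemma~\ref{lem:strat-convergence}) ensures the payoff estimates fed into $\mdp$ add no asymptotic error. Standard constrained-MDP LP duality then yields risk vectors $d^a$ that carry the residual risk budget for each observation and satisfy, under the sampled first-step action distribution, $\E_{a\sim d_\pi}\big[\E_{o}[d^a(o)\mid a]\big]\le\varrboundr$, with the continuation instance under threshold $(\varthr-R)/\discount$ and budget $d^a(o)$ again feasible. As there are only $N$ steps, a union bound over the finitely many ``bad'' events (LP infeasibility at some step, or some step's exploration not attaining the limit of Lemma~\ref{lem:search-convergence}) shows they vanish simultaneously.

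Granting the invariant, both claims follow by unrolling. For the risk, I show by downward induction on the remaining horizon that, on the good event, the conditional probability that RAMCP returns payoff $<\thr$ given the history up to step $k$ converges to a value $\le\varrboundr_k$ (the residual budget at step $k$, with $\varrboundr_1=\rbound$): playing $a\sim d_\pi$ and observing $o$ moves to step $k+1$ with budget $d^a(o)$, the inductive hypothesis bounds the continuation risk by $d^a(o)$, and $\E_{a\sim d_\pi}[\E_o[d^a(o)\mid a]]\le\varrboundr_k$ by the LP constraint; at $k=1$ this gives $\le\rbound$. The conservative fall-back regime (where the algorithm switches to risk minimization, so the bound above no longer applies) and the steps that fail to reach the limit of Lemma~\ref{lem:search-convergence} together occur with probability $o(1)$, so the unconditional return-below-$\thr$ probability converges to a number $\le\rbound$. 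For the expectation, Lemma~\ref{lem:strat-convergence}, applied at each step on the good event, gives that $d_\pi$ converges to the first-step distribution of an optimal solution of the residual instance, of value $\riskvalue{\varthr}{\varrboundr}$ at the residual belief and horizon; by the decomposition principle for constrained MDPs --- after playing $a$ and observing $o$, an optimal policy continues optimally for the residual instance with threshold $(\varthr-R)/\discount$ and budget $d^a(o)$ --- the limiting expected return satisfies $\E_{a\sim d_\pi}\big[R+\discount\cdot\E_o[\riskvalue{(\varthr-R)/\discount}{d^a(o)}]\big]=\riskvalue{\varthr}{\varrboundr}$, and unrolling from $\varthr=\thr$, $\varrboundr=\rbound$ gives convergence of the expected return to $\riskvalue{\thr}{\rbound}$; the $h.V_a$ estimates for unexplored leaves converge to the true optimal continuation payoffs by POMCP's convergence theorem (Theorem~1 of~\cite{SV:POMCP}) for a suitable exploration constant $\econst$.

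The main obstacle is the inductive propagation of feasibility together with the exact matching between the LP's penalty constraint and the true residual risk constraint when $\treeclos$ is only a partial exploration of the history tree: one must show that in the limit $\mdp$ neither over- nor under-estimates the residual risk --- this is precisely where Lemma~\ref{lem:search-convergence}(2) is indispensable, since it pins down the ``missing successors'' --- and that the risk vectors $d^a$ delivered by the LP genuinely certify feasibility of the continuation sub-problems. The rest is bookkeeping to make the ``with probability $\to1$'' qualifiers compose across all $N$ steps, including through the conservative risk-minimizing branch; everything else reduces to routine telescoping and to standard constrained-MDP LP duality and POMCP convergence.
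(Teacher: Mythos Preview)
Your proposal is correct and follows essentially the same route as the paper: induction on the horizon $N$, using Lemma~\ref{lem:search-convergence}(2) to guarantee that $\trexpl$ eventually equals $\trsafe$ so that the constrained MDP $\mdp$ faithfully encodes the residual EOPG instance, then using the LP solution and the risk vectors $d^a$ to propagate feasibility and the risk bound to the continuation sub-problems, and invoking Lemma~\ref{lem:strat-convergence} together with POMCP convergence of the $h.V_a$ attributes for the expectation claim. Your write-up is somewhat more explicit than the paper about the telescoping of $\varthr$, the union bound over the $N$ per-step ``bad'' events, and the handling of the conservative fall-back branch, but the underlying decomposition and the key lemmas invoked are the same.
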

\begin{sproof}
The proof proceeds by an induction on the length of the horizon $N$, using 
Lemma~\ref{lem:search-convergence} (2.) and Lemma~\ref{lem:strat-convergence}.
\end{sproof}

\noindent
RAMCP also provides the following anytime guarantee.

\begin{theorem}
\label{thm:safety}
Let $u$ be the value of the $U$-attribute of the root of 
$\mathcal{T}_{\expl}$ after the end of the first search phase of RAMCP execution. Then the probability that the remaining execution of 
RAMCP returns a payoff smaller than $\thr$ is at most $\max\{u,\rbound\}$. 
\end{theorem}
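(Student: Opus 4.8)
\noindent\emph{Proof plan.} I would prove a stronger per-step invariant by induction on the number of decision steps that still remain to be executed, and then read off the theorem as the instance for the first step. The enabling bookkeeping fact is that the update $\varthr\leftarrow(\varthr-R)/\discount$ in \texttt{PlayAction} is tailored so that, at the entry to \texttt{SelectAction} in a decision step with current root $h$, belief $b_h$, threshold $\varthr$, risk bound $\varrboundr$, and $U$-value $u=h.U$ (taken \emph{after} that step's search phase), the event ``RAMCP's realized total payoff is below $\thr$'' is the same as the event ``the discounted payoff accumulated over the remaining steps, measured from the current step, is below $\varthr$''. The invariant I would establish is that, conditioned on the algorithm's state at such a point, this latter event has probability at most $\max\{u,\varrboundr\}$. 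At the first step $\varthr=\thr$, $\varrboundr=\rbound$, and, by hypothesis, $u$ is the value in the statement, so the theorem is exactly this invariant applied there; the base case (no steps remain) is immediate, since a length-$N$ node of $\trexpl$ has $U$-value $0$ and is, by construction, safe, and is conservatively assigned $U$-value $1$ otherwise.

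For the inductive step I would split on the three branches of \texttt{SelectAction}. If $\varrboundr=1$ or $u=1$ the bound is vacuous. If $u\le\varrboundr<1$ (the LP branch), I first argue that the constrained MDP $\mdp$ is feasible: the dynamic programming recursion maintaining the $U$-attributes (lines~\ref{algoline:dp1}--\ref{algoline:dp2}) is literally the Bellman recursion for the minimum reachability-risk on $\treeclos$, with ``missing'' closure leaves and never-explored subtrees treated as certainly unsafe, so the deterministic policy playing a $U_a$-minimizing (hence allowed) action at every node witnesses constrained value $\ge 1-u\ge 1-\varrboundr$. Then the occupation-measure linear program for constrained MDPs~\cite{Altman:book} returns $d_\pi$ and risk vectors $\{d^a\}$ with $\sum_a d_\pi(a)\sum_o\prob(h,hao)\,d^a(o)\le\varrboundr$, where $d^a(o)$ is the conditional risk still carried by the optimal policy after playing $a$ and observing $o$; observability of rewards is used here to guarantee that the real POMDP's observation law from $b_h$ under $a$ is exactly $\prob(h,hao)$ and that the received reward (hence the new threshold) is a deterministic function of the observation. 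The crucial coupling step is that the next root's $U$-value satisfies $u'\le d^a(o)=\varrboundr'$: already \emph{before} the next search, the $U$-value of $hao$ equals the minimum reachability-risk of the $\treeclos$-subtree rooted there, which is at most the risk of $\pi$'s continuation from $hao$, namely $d^a(o)$, and search only decreases it. Consequently $\max\{u',\varrboundr'\}=\varrboundr'=d^a(o)$, the induction hypothesis applies at the next step, and writing the remaining payoff as $R+\discount\cdot(\text{remaining payoff from the next step})$ and averaging over $(a,o)$ gives the bound $\varrboundr$. In the third branch, $\varrboundr<u<1$ (the risk-minimizing fallback), the algorithm plays $a^*=\arg\min_a h.U_a$ and pins $\varrboundr'=0$; the same Bellman identity gives $\sum_o\prob(h,ha^*o)\,v(o)=u$, where $v(o)$ is the pre-search $U$-value of $ha^*o$ (and $1$ if that history left $\trexpl$), while $u'\le v(o)$ and, by the induction hypothesis, the remaining payoff from the next step is below the updated threshold with probability $\le\max\{u',0\}=u'\le v(o)$; averaging over $o$ yields the bound $u=\max\{u,\varrboundr\}$.

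I expect the main obstacle to be the coupling between consecutive decision steps: making precise (i) that the risk vectors extracted from the constrained-MDP LP truly measure the one-step risk of the computed policy, correctly charging the probability mass that ``falls off'' $\trexpl$ into conservatively-unsafe closure leaves; and (ii) the monotonicity claim $u'\le\varrboundr'$, which is what lets feasibility of $\mdp$ (or, in the fallback case, conservative risk-minimization with $\varrboundr$ pinned to $0$) propagate all the way along the run. A secondary point worth spelling out carefully is the direction of the comparison ``a length-$N$ node of $\trexpl$ is reached'' versus ``the realized payoff is at least $\thr$'': reaching such a node \emph{certifies} payoff $\ge\thr$, so the constrained-MDP risk can only over-estimate the true risk, which is precisely the inequality direction the theorem requires; this relies on Lemma~\ref{lem:search-convergence}(1) for the semantics of the $U$-attribute and on observability of rewards for exactness of the explicit-tree beliefs.
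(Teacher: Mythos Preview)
Your proposal is correct and follows essentially the same route as the paper's proof: induction on the number of remaining steps (the paper phrases it as induction on $N$), a case split between the LP branch ($u\le\varrboundr$) and the risk-minimizing fallback ($u>\varrboundr$), and the key coupling inequality $hao.U\le d^a(o)$, justified by the fact that $hao.U$ is the minimum reachability-risk on the explicit subtree while $d^a(o)$ is the risk of one particular policy there. Your write-up is in fact more careful than the paper's---you make explicit the monotonicity of $U$ across search phases, the feasibility of $\mathcal{M}$ via the greedy $U$-minimizing policy, and the $\max\{u',\varrboundr'\}=\varrboundr'$ simplification that lets the induction hypothesis apply---whereas the paper leaves these implicit or covers them with the opening remark that $\mathcal{T}_{\mathit{exp}}$ only grows.
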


\noindent\textbf{Unobservable Rewards.} RAMCP could be adapted to work with 
unobservable rewards, at the cost of more computations. The difference is 
in 
the construction of the constraint function $C$: for unobservable rewards, 
the same history of observations and actions might encompass both paths 
that have payoff above the threshold and paths that do not. Hence, we 
would need to compute the probability of paths corresponding to a 
given branch that satisfy the threshold condition. This could be achieved by 
maintaining beliefs over accumulated payoffs.

\section{Experiments}
We implemented RAMCP on top of the POMCP implementation in 
AI-Toolbox~\cite{ait} and tested on three sets of benchmarks. 
The first two are the classical Tiger~\cite{kaelbling1998planning} and Hallway~\cite{SS04} benchmarks naturally 
modified to contain a risk taking aspect. In our variant of the Hallway 
benchmark, we again have a robot 
navigating a grid maze, oblivious to the exact heading and coordinates 
but able to sense presence of walls on
neighbouring cells. Some cells of the maze are \emph{tasks}. Whenever such a 
task cell is entered, the robot attempts to perform the task. When performing 
a task, there is a certain probability of a good outcome, after which a 
positive reward is gained, as well as a chance of a bad outcome, after which a 
negative penalty is incurred. There are different types of tasks in the maze 
with various expected rewards and risks of bad outcomes. Once a task is 
completed, it disappears from the maze. There are also 
``traps'' that probabilistically spin the robot around. 

As a third benchmark we consider an MDP variant of the Hallway benchmark.
Since the Tiger benchmark is small, we present results for the larger 
benchmarks. Our implementation and the benchmarks are available 
on-line.\footnote{https://git.ist.ac.at/petr.novotny/RAMCP-public}

We ran RAMCP benchmarks with different risk thresholds, starting with unconstrained POMCP 
and progressively decreasing risk until RAMCP no longer finds a feasible solution. 
For each risk bound we average outcomes of 1000 executions. 
In each execution, we used a timeout of 5 seconds in the first decision step and 
0.1 seconds for the remaining steps. Intuitively, in the first step the agent is 
allowed a ``pre-processing'' phase before it starts its operation, trying to 
explore as much as possible. 
Once the agent performes the first action, it aims to select actions as fast 
as possible.
We set the exploration constant to $\approx 2\cdot X$, where $X$ is the 
difference between 
largest and smallest undiscounted payoffs achievable in a given instance.
The test configuration was CPU: Intel-i5-3470, 3.20GHz, 4 cores; 
8GB RAM; OS: Linux Mint 18 64-bit. 

\paragraph{Discussion.} In Figure~\ref{fig:plots-benchmarks}, we present results of three of the benchmarks: (1.) The Hallway POMDP benchmark ($|\states|=67584,|\act|=3,|\obs|=43,\discount=0.95,N=30,\thr=10$); (2.) the perfectly observable version of our Hallway benchmark ($|\states|=4512,|\act|=3,|\obs|=1,\discount=0.98,N=120,\thr=10$); and (3.) smaller POMDP instance of the Hallway benchmark ($|\states|=7680$, $|\act|=3$,$|\obs|=33,\discount=0.8,N=35,\thr=10$).
In each figure, the $x$ axis represents the risk bound $\rbound$ -- the 
leftmost number is typically close to the risk achieved in POMCP trials. For 
each $\rbound$ considered we plot the following quantities: average payoff 
(secondary, i.e. right, $y$ axis), empirical risk (the fraction of trials in 
which RAMCP returned payoff smaller than $\thr$, primary $y$ axis) and stated 
risk (the average of $\max\{\rbound,U$-value of the root of $\tree_{\expl}$ 
after first search phase$\}$, primary axis). As expected, the stated risk 
approximates a lower bound on the empirical risk. Also, when a risk bound is 
decreased, average payoff tends to decrease as well, since the risk bound 
constraints the agent's behaviour. This trend is somewhat violated in some 
datapoints: this is because in particular for larger benchmarks, the timeout 
does not allow for enough exploration so as to converge to a tight 
approximation of the optimal policy. The main obstacle here is the usage of 
exact belief updates within the explicit tree, which is computationally 
expensive. An interesting direction for the future is to replace these updates 
with a 
particle-filter approximation (in line with POMCP) and thus increase search 
speed in exchange for weaker theoretical guarantees. Nonetheless, already the 
current version of RAMCP demonstrates the ability to perform risk vs. 
expectation trade-off in POMDP planning. 

\paragraph{Comparison with deterministic policies.} 
As illustrated in Example~\ref{examp1}, the difference of values for randomized 
vs deterministic policies can be large.
We ran experiments on Hallway POMDP benchmarks to compute deterministic 
policies (by computing, in action selection phase, an optimal deterministic 
policy in the constrained MDP $\mdp$, which entails solving a MILP problem). 
For instance, in benchmark (1.) for $\rbound=0.41$ the deterministic policy yields expected payoff $645.017$ compared to  $1166.8$ achieved by randomized policy. 
In benchmark (3.) with $\rbound=0.3$ we have expected payoff $107.49$ for deterministic vs. $695.81$ for randomized policies.

\section{Conclusion}
In this work, we studied the  expected payoff optimization with probabilistic 
guarantees in POMDPs. We introduced an online algorithm with anytime risk 
guarantees for the EOPG problem, implemented this algorithm, and tested it on 
variants of classical benchmarks. Our experiments show that our algorithm, 
RAMCP, is able to perform risk-averse planning in POMDPs.

\section*{Acknowledgements}
The research presented in this paper was supported by the Vienna Science and 
Technology Fund (WWTF) grant ICT15-003; Austrian Science Fund (FWF): S11407-N23 
(RiSE/SHiNE); and an ERC Start Grant (279307: Graph Games).

\bibliographystyle{named}
\bibliography{bibliography-master,new}

\begin{thebibliography}{}

\bibitem[\protect\citeauthoryear{AI-Toolbox}{2017}]{ait}
AI-Toolbox.
\newblock {AI-Toolbox [Computer Software]}.
\newblock {https://github.com/Svalorzen/AI-Toolbox}, 2017.

\bibitem[\protect\citeauthoryear{Altman}{1999}]{Altman:book}
Eitan Altman.
\newblock {\em Constrained Markov decision processes}, volume~7.
\newblock CRC Press, 1999.

\bibitem[\protect\citeauthoryear{Bruy{\`e}re \bgroup \em et al.\egroup
  }{2014}]{bfrr14}
V{\'e}ronique Bruy{\`e}re, Emmanuel Filiot, Mickael Randour, and
  Jean-Fran\c{c}ois Raskin.
\newblock Meet your expectations with guarantees: Beyond worst-case synthesis
  in quantitative games.
\newblock In {\em {STACS}}, pages 199--213. Schloss Dagstuhl - Leibniz-Zentrum
  fuer Informatik, 2014.

\bibitem[\protect\citeauthoryear{Cassandra}{1998}]{cassandra1998exact}
A.R. Cassandra.
\newblock {\em {Exact and approximate algorithms for partially observable
  Markov decision processes}}.
\newblock PhD thesis, 1998.

\bibitem[\protect\citeauthoryear{Chatterjee \bgroup \em et al.\egroup
  }{2015a}]{CCGK15}
K.~Chatterjee, M.~Chmelik, R.~Gupta, and A.~Kanodia.
\newblock Optimal cost almost-sure reachability in {POMDPs}.
\newblock In {\em {AAAI}}. AAAI Press, 2015.

\bibitem[\protect\citeauthoryear{Chatterjee \bgroup \em et al.\egroup
  }{2015b}]{ckk15}
Krishnendu Chatterjee, Zuzana Kom{\'a}rkov{\'a}, and Jan Kret{\'i}nsk{\'y}.
\newblock Unifying two views on multiple mean-payoff objectives in {M}arkov
  decision processes.
\newblock In {\em {LICS}}, pages 244--256. {IEEE} Computer Society, 2015.

\bibitem[\protect\citeauthoryear{Chatterjee \bgroup \em et al.\egroup
  }{2017}]{CNPRZ17:BWC-POMDP}
Krishnendu Chatterjee, Petr Novotn{\'{y}}, Guillermo~A. P{\'{e}}rez,
  Jean{-}Fran{\c{c}}ois Raskin, and Dorde Zikelic.
\newblock Optimizing expectation with guarantees in {POMDPs}.
\newblock In {\em AAAI}, pages 3725--3732. AAAI Press, 2017.

\bibitem[\protect\citeauthoryear{Defourny \bgroup \em et al.\egroup
  }{2008}]{DefEW:08:risk-mdp}
Boris Defourny, Damien Ernst, and Louis Wehenkel.
\newblock {Risk-aware decision making and dynamic programming}.
\newblock Technical report, 2008.

\bibitem[\protect\citeauthoryear{Feinberg and Shwartz}{1995}]{FS95:CMDP}
Eugene~A Feinberg and Adam Shwartz.
\newblock Constrained markov decision models with weighted discounted rewards.
\newblock {\em Mathematics of Operations Research}, 20(2):302--320, 1995.

\bibitem[\protect\citeauthoryear{Hou \bgroup \em et al.\egroup
  }{2016}]{HYV16:risk-pomdps}
Ping Hou, William Yeoh, and Pradeep Varakantham.
\newblock Solving risk-sensitive {POMDPs} with and without cost observations.
\newblock In {\em {AAAI}}, pages 3138--3144. AAAI Press, 2016.

\bibitem[\protect\citeauthoryear{Howard}{1960}]{Howard}
H.~Howard.
\newblock {\em {Dynamic Programming and {Markov} Processes}}.
\newblock MIT Press, 1960.

\bibitem[\protect\citeauthoryear{Kaelbling \bgroup \em et al.\egroup
  }{1996}]{LearningSurvey}
L.~P. Kaelbling, M.~L. Littman, and A.~W. Moore.
\newblock {Reinforcement learning: A survey}.
\newblock {\em Journal of Artificial Intelligence Research}, 4:237--285, 1996.

\bibitem[\protect\citeauthoryear{Kaelbling \bgroup \em et al.\egroup
  }{1998}]{kaelbling1998planning}
L.~P. Kaelbling, M.~L. Littman, and A.~R. Cassandra.
\newblock {Planning and acting in partially observable stochastic domains}.
\newblock {\em Artificial intelligence}, 101(1):99--134, 1998.

\bibitem[\protect\citeauthoryear{Kim \bgroup \em et al.\egroup
  }{2011}]{KimLKP:11:point-based-constrained-POMDP}
Dongho Kim, Jaesong Lee, Kee-Eung Kim, and Pascal Poupart.
\newblock Point-based value iteration for constrained {POMDPs}.
\newblock In {\em IJCAI}, pages 1968--1974, 2011.

\bibitem[\protect\citeauthoryear{Kocsis and Szepesv{\'a}ri}{2006}]{KS06}
Levente Kocsis and Csaba Szepesv{\'a}ri.
\newblock {Bandit Based Monte-Carlo Planning}.
\newblock In {\em {ECML}}, volume 4212 of {\em {LNCS}}, pages 282--293.
  Springer, 2006.

\bibitem[\protect\citeauthoryear{Kress-Gazit \bgroup \em et al.\egroup
  }{2009}]{KGFP09}
H.~Kress-Gazit, G.~E. Fainekos, and G.~J. Pappas.
\newblock Temporal-logic-based reactive mission and motion planning.
\newblock {\em IEEE Transactions on Robotics}, 25(6):1370--1381, 2009.

\bibitem[\protect\citeauthoryear{Kurniawati \bgroup \em et al.\egroup
  }{2008}]{khl08}
H.~Kurniawati, D.~Hsu, and W.S. Lee.
\newblock {SARSOP}: Efficient point-based {POMDP} planning by approximating
  optimally reachable belief spaces.
\newblock In {\em {Robotics: Science and Systems}}, pages 65--72, 2008.

\bibitem[\protect\citeauthoryear{Littman}{1996}]{LittmanThesis}
M.~L. Littman.
\newblock {\em {Algorithms for Sequential Decision Making}}.
\newblock PhD thesis, Brown University, 1996.

\bibitem[\protect\citeauthoryear{Papadimitriou and Tsitsiklis}{1987}]{PT87}
C.~H. Papadimitriou and J.~N. Tsitsiklis.
\newblock The complexity of {M}arkov decision processes.
\newblock {\em Math. Oper. Res.}, 12:441--450, 1987.

\bibitem[\protect\citeauthoryear{Poupart \bgroup \em et al.\egroup
  }{2015}]{PMPKGB15:constrained-POMDP}
Pascal Poupart, Aarti Malhotra, Pei Pei, Kee{-}Eung Kim, Bongseok Goh, and
  Michael Bowling.
\newblock Approximate linear programming for constrained partially observable
  {M}arkov decision processes.
\newblock In {\em {AAAI}}, pages 3342--3348. AAAI Press, 2015.

\bibitem[\protect\citeauthoryear{Puterman}{2005}]{Puterman2005}
M~L Puterman.
\newblock {\em {{Markov} Decision Processes}}.
\newblock Wiley-Interscience, 2005.

\bibitem[\protect\citeauthoryear{Randour \bgroup \em et al.\egroup
  }{2015}]{rrs15}
Mickael Randour, Jean-Fran\c{c}ois Raskin, and Ocan Sankur.
\newblock Variations on the stochastic shortest path problem.
\newblock In {\em {VMCAI}}, volume 8931 of {\em {LNCS}}, pages 1--18. Springer,
  2015.

\bibitem[\protect\citeauthoryear{Ross \bgroup \em et al.\egroup
  }{2008}]{RPPC08:online-planning-pomdp}
St{\'e}phane Ross, Joelle Pineau, S{\'e}bastien Paquet, and Brahim Chaib-draa.
\newblock Online planning algorithms for {POMDPs}.
\newblock {\em J. Artif. Intell. Res. (JAIR)}, 32:663--704, 2008.

\bibitem[\protect\citeauthoryear{Russell and Norvig}{2010}]{RN10}
Stuart~J. Russell and Peter Norvig.
\newblock {\em Artificial Intelligence - {A} Modern Approach (3. internat.
  ed.)}.
\newblock Pearson Education, 2010.

\bibitem[\protect\citeauthoryear{Santana \bgroup \em et al.\egroup
  }{2016}]{STW16:BWC-POMDP-state-safety}
Pedro Santana, Sylvie Thi{\'e}baux, and Brian~C. Williams.
\newblock {RAO*}: An algorithm for chance-constrained {POMDP's}.
\newblock In {\em {AAAI}}, pages 3308--3314. AAAI Press, 2016.

\bibitem[\protect\citeauthoryear{Silver and Veness}{2010}]{SV:POMCP}
David Silver and Joel Veness.
\newblock Monte-{C}arlo planning in large {POMDPs}.
\newblock In {\em {NIPS 23}}, pages 2164--2172. Curran Associates, Inc., 2010.

\bibitem[\protect\citeauthoryear{Smith and Simmons}{2004}]{SS04}
T.~Smith and R.~Simmons.
\newblock {Heuristic search value iteration for {POMDPs}}.
\newblock In {\em {UAI}}, pages 520--527. AUAI Press, 2004.

\bibitem[\protect\citeauthoryear{Sprauel \bgroup \em et al.\egroup
  }{2014}]{SprKT:14:path-constrained-MDPs}
Jonathan Sprauel, Andrey Kolobov, and Florent Teichteil-K{\"o}nigsbuch.
\newblock Saturated path-constrained mdp: Planning under uncertainty and
  deterministic model-checking constraints.
\newblock In {\em AAAI}, pages 2367--2373, 2014.

\bibitem[\protect\citeauthoryear{Undurti and
  How}{2010}]{UH10:constrained-pomdp-online}
Aditya Undurti and Jonathan~P How.
\newblock {An online algorithm for constrained POMDPs}.
\newblock In {\em {ICRA}}, pages 3966--3973. IEEE, 2010.

\bibitem[\protect\citeauthoryear{Ye \bgroup \em et al.\egroup
  }{2017}]{YSHL17:despot-jair}
Nan Ye, Adhiraj Somani, David Hsu, and Wee~Sun Lee.
\newblock {DESPOT: O}nline {POMDP} planning with regularization.
\newblock {\em J. Artif. Intell. Res.}, 58:231--266, 2017.

\end{thebibliography}

\clearpage
\appendix
\begin{center}
	{\Large Technical Appendix}
\end{center}
\section{Proof of Lemma~\ref{lem:finhon}}
First, assume that there exists a feasible solution $\sigma$ of the 
infinite-horizon problem. Due to the choice of $N(\eps)$, each play 
$\rho$ that satisfies $\discpath{\rho}{\discount,N(\eps)} \leq 
\thr-\frac{\eps}{2}$ also 
satisfies  $\discpath{\rho}{\discount} \leq \thr$, so $\sigma$ satisfies 
$\risk(\sigma,\thr-\frac{\eps}{2},\discpath{}{\discount,N(\eps)})\leq \rbound$.

	Assume that $\sigma$ has the desired properties. First we show that 
	$\probm^{\sigma}_{\initd} (\discpath{}{\discount} \leq 
	\thr-{\eps}) \leq \rbound$. But due to the choice of $N(\eps)$, each play 
	$\rho$ that satisfies $\discpath{\rho}{\discount} \leq \thr-\eps$ also 
	satisfies  $\discpath{\rho}{\discount,N(\eps)} \leq \thr-\frac{\eps}{2}$, 
	from 
	which the desired inequality easily follows. 
	
	Next, assume, for the sake of contradiction, that there is a policy $\pi$ 
	such 
	that $\probm^{\pi}_{\initd} 
	(\discpath{}{\discount} \leq 
	\thr) \leq \rbound$ and $\E^{\pi}_{\initd}[\discpath{}{\discount}] > 
	\E^{\sigma}_{\initd}[\discpath{}{\discount}] + \frac{\eps}{2}$. Each 
	play $\rho$ that satisfies $\discpath{\rho}{\discount,N(\eps)} \leq 
	\thr-\frac{\eps}{2}$ also satisfies $\discpath{\rho}{\discount} \leq \thr$, 
	from which it follows that 
	$\probm^{\pi}_{\initd}(\discpath{\rho}{\discount,N(\eps)}) \leq 
	\probm^{\pi}_{\initd}(\discpath{\rho}{\discount} \leq \thr)\leq \rbound$. 
	Moreover, $|\E^{\pi}_{\initd}[\discpath{\rho}{\discount}] - 
	\E^{\pi}_{\initd}[\discpath{\rho}{\discount,N(\eps)}]|\leq \frac{\eps}{2}$, 
	from which it follows that 
	$\E^{\pi}_{\initd}[\discpath{\rho}{\discount,N(\eps)}] > 
	\E^{\sigma}_{\initd}[\discpath{\rho}{\discount,N(\eps)}]$, a contradiction 
	with 
	the constrained optimality of $\sigma$.

\section{Proof of Lemma~\ref{lem:search-convergence}}

\renewcommand{\varh}{f}

 Before we proceed with the proof, we fix additional notation. For a history 
 $h$ and $0\leq j \leq \len{h}$ we denote by $h_j$ the and $h_{-j}$ the prefix 
 of $h$ of length $j$. Next, denote $h_{-j}$ the history obtained by removing 
 prefix of length $j$ from $h$. We also denote $\discpath{h}{\discount} = 
 \sum_{j=0}^{\len{h}-1} \discount^j\cdot\rewards(h_j,h_{j+1})$. 
 
%
%

\textbf{Proof of part (1.)}. We prove a more general statement: Fix any point 
of algorithm's execution, and let $L$ be the length of $h_{\mathit{root}}$ at 
this point. Then for any node $\varh$ that is at this point a node of the 
explicit tree there exists a policy $\sigma$ whose risk threshold at $(\varthr 
- \discpath{\varh_{-L}}{\discount})/\gamma^{\len{\varh}-L}$ when starting from 
belief $b_\varh$ and playing for $N-\len{\varh}$ steps is at most $\varh.U$, 
formally  $\probm^{\sigma}_{b_\varh}(\discpath{}{\discount,\len{\varh}-N}
< (\varthr - \discpath{\varh_{-L}}{\discount})/\gamma^{\len{\varh}-L})\leq 
\varh.U$. The statement of the lemma then follows by plugging the root of 
$\mathcal{T}_{\expl}$ into $\varh$. 

Before the very first call of procedure $\texttt{Simulate}$, the statement 
holds, as $\mathcal{T}_{\expl}$ only contains an empty history $\epsilon$ with 
trivial upper risk bound $\epsilon.U$. 

Next, assume that the statement holds \emph{after} each execution of the  
$\texttt{Simulate}$ procedure. Then, in procedure $\texttt{PlayAction}$, 
$\mathcal{T}_{\expl}$ is pruned so that now it contains a sub-tree of the 
original tree. At this point $L$ is incremented by one but at the same time 
$\varthr$ is set to $(\varthr-\rewards(o,a))/\gamma$, where $oa$ is the common 
prefix for all histories that remain in $\mathcal{T}_{\expl}$ after the 
pruning. Hence, for all such histories $\varh$ the term $(\varthr - 
\discpath{\varh_{-L}}{\discount})/\gamma^{\len{\varh}-L}$ is unchanged in the 
$\texttt{PlayAction}$ procedure and the statement still holds. 
Hence, it is sufficient to prove that the validity of the statement is 
preserved whenever a new node is added to $\mathcal{T}_{\expl}$ or the 
$U$-attribute of some node in $\mathcal{T}_{\expl}$ is changed inside 
procedure~\texttt{UpdateTrees}.

Now when a new $h$ with $\len{h}=N$ is added to $\mathcal{T}_{\expl}$, it is 
node that corresponds to a history $h$ such that all paths consistent with 
$h_{-L}$ have reward at least $\varthr$. For such $h$, the terms $(\varthr - 
\discpath{h_{-L}}{\discount})/\gamma^{\len{h}-L}$ and $N-\len{h}$ evaluate to 
$0$ and thus the statement holds for $h.U=0$.

So finally, suppose that $\varh.U$ is changed on line~\ref{algoline:dp2}. Let 
$a$ be the action realizing the minimum. For each child $\varh' = \varh a o'$ 
of $\varh$ in the explicit tree there is a policy $\sigma_{\varh'}$ 
surpassing threshold $(\varthr - 
\discpath{\varh'_{-L}}{\discount})/\gamma^{\len{\varh'}-L}=\rewards(o,a)+\discount\cdot(\varthr
 - \discpath{\varh_{-L}}{\discount})/\gamma^{\len{\varh}-L}$ with probability 
$\geq 1-\varh'.U$. By selecting action $a$ in $f$ and then continuing with 
$\sigma_{\varh'}$ when observation $o'$ is received (if we receive observation 
$o'$ s.t. $\varh a o'$ is not in $\mathcal{T}_{\expl}$, we can continue with an 
arbitrary policy), we get a policy $\sigma$ with the desired property. This 
is because of the dynamic programming update on the previous line.

\textbf{Proof of part (2.)}. We again start by fixing some notation. Fix any 
point in execution of the $\texttt{Search}$ procedure. Let $\hroot$ be the 
current root of $\trsearch$. The
\emph{safe} sub-tree of $\mathcal{T}_\pomdp$ rooted in $\hroot$ is a sub-tree 
$\trsafe(\hroot)$ of
$\mathcal{T}_\pomdp$ satisfying the following property: a history $\varh$ 
belongs 
to $\trsafe$ if and only if $\hroot$ is a prefix of $\varh$ and at the same 
time 
$\varh$ can be extended into a history $h$ of length 
$N$ such that $\discpath{\varh_{-\len{\hroot}}}{\discount} \geq \thr$ (in 
particular, all such histories $h$ belong to $\trsafe(\hroot)$). That is 
$\trsafe$ contains exactly those histories that lead to surpassing the current 
threshold.

We start with the following lemma, which will be also handy later.

\begin{lemma}
\label{lem:inf-exploration}
Let $\timeout = \infty$.
Fix a concrete call of procedure $\texttt{Search}$, and let $\hroot$ be the 
root of $\trsearch$ and $\trexpl$ in this call. Then, with probability $1$, 
each node of the sub-tree of $\mathcal{T}_{\pomdp}$ rooted is visited in 
infinitely many calls of procedure $\texttt{Simulate}$.
\end{lemma}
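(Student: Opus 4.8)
The plan is to establish the stronger assertion that, with probability $1$, \emph{every} node of the sub-tree $T$ of $\mathcal{T}_{\pomdp}$ rooted at $\hroot$ is the history argument of $\texttt{Simulate}$ in infinitely many of the top-level invocations $\texttt{Simulate}(\cdot,\hroot,\cdot,0)$ made during this $\texttt{Search}$ call (of which there are infinitely many, since $\timeout=\infty$); from now on we say such a node is \emph{$\texttt{Simulate}$-visited} infinitely often. We prove this by induction on the depth $d$ of the node relative to $\hroot$. Since $N$, $\act$ and $\obs$ are finite, $T$ is finite, so it suffices to prove the claim for each fixed node with probability $1$ and take a union bound over the nodes of $T$. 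We index the top-level invocations by $n=1,2,\dots$ (the ``$n$-th simulation''), write $\mathcal{F}_{n-1}$ for the $\sigma$-algebra generated by all randomness used before the $n$-th simulation, and will repeatedly use L\'evy's conditional Borel--Cantelli lemma: for events $F_n$ measurable with respect to the randomness up to and including simulation $n$, almost surely $F_n$ occurs for infinitely many $n$ if and only if $\sum_n\probm(F_n\mid\mathcal{F}_{n-1})=\infty$.

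Two facts drive the induction step. \textbf{(i) Monotonicity of $\trsearch$.} Within a single $\texttt{Search}$ call $\trsearch$ only grows (pruning happens in $\texttt{PlayAction}$, between $\texttt{Search}$ calls), and during one simulation the descent stays inside $\texttt{Simulate}$ until it first reaches a node not yet in $\trsearch$; that node is then added and the rest of the descent is done by $\texttt{Rollout}$. Hence, once a node $h$ with $\len{h}<N$ has been $\texttt{Simulate}$-visited it belongs to $\trsearch$ forever after (within the call), and every later $\texttt{Simulate}$-visit of $h$ executes the UCB branch (from line~\ref{alg:line1}), updating $h.N,h.N_a,h.V_a$; moreover $\texttt{UpdateTrees}$ never modifies these attributes for nodes already in $\trsearch$. \textbf{(ii) Belief consistency.} For any $h\in T$ and any $n$, conditioned on $\mathcal{F}_{n-1}$ and on the event that the $n$-th simulation $\texttt{Simulate}$-visits $h$, the state sampled at $h$ has distribution $b_h$. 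Indeed, the actions chosen at the ancestors of $h$ during the descent are either deterministic functions of $\mathcal{F}_{n-1}$ (UCB) or an independent uniform draw (on a node's first visit), hence independent of the fresh state/observation samples; conditioning on those samples matching the observations of $h$ therefore yields exactly the Bayesian belief $b_h$. (We use that the root of $\trsearch$ coincides with the root of $\trexpl$ and thus carries the exact belief $b_\hroot$, from which each simulation independently draws its initial state.)

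Now the induction. For $d=0$: $\hroot\in\trsearch$ from the outset of the $\texttt{Search}$ call, so every simulation $\texttt{Simulate}$-visits $\hroot$. For the step, fix $h\in T$ of depth $d$; if $h$ has no children in $T$ there is nothing to prove, so assume it has (in particular $\len{h}<N$). By the inductive hypothesis, with probability $1$ infinitely many simulations $\texttt{Simulate}$-visit $h$, and by (i) all but at most the first of these execute the UCB branch at $h$, so $h.N\to\infty$; since all returns are bounded (the remaining horizon is finite and rewards are bounded), the values $h.V_a$ stay uniformly bounded. The standard UCB1 argument then shows that, almost surely, each action $a\in\act$ is selected at $h$ in infinitely many of these visits: an action selected only finitely often keeps $h.N_a$ fixed while its bonus $\econst\sqrt{\log h.N/h.N_a}$ grows without bound and eventually exceeds the bounded UCB value of any action still being selected, a contradiction (cf.\ the analysis of UCT in~\cite{KS06} and of POMCP in~\cite{SV:POMCP}). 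Now fix a child $hao\in T$, and let $F_n$ be the event that the $n$-th simulation $\texttt{Simulate}$-visits $h$ with $h$ already in $\trsearch$ and UCB at $h$ selecting $a$. By the above, $F_n$ occurs infinitely often a.s., hence $\sum_n\probm(F_n\mid\mathcal{F}_{n-1})=\infty$ a.s. By (ii), conditioned on $F_n$ and $\mathcal{F}_{n-1}$ the observation sampled after playing $a$ equals $o$ with probability $\prob(h,hao)=\sum_{s,s'}b_h(s)\trans(s'\mid s,a)\obsfunc(o\mid s')$, a constant that is strictly positive because $hao\in T$ is a valid history; and when this observation is $o$, $\texttt{Simulate}$ is invoked on $hao$. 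Therefore the $n$-th simulation $\texttt{Simulate}$-visits $hao$ with probability (conditional on $\mathcal{F}_{n-1}$) at least $\prob(h,hao)\,\probm(F_n\mid\mathcal{F}_{n-1})$, so the sum over $n$ of these conditional probabilities is infinite a.s., and L\'evy's lemma gives that $hao$ is $\texttt{Simulate}$-visited infinitely often a.s. A union bound over the finitely many children of $h$ in $T$ finishes the step, and a union bound over all nodes of $T$ finishes the proof.

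The main obstacle is the UCB exhaustion step: the values $h.V_a$ are averages of returns produced by a co-evolving search tree rather than of i.i.d.\ samples, so one has to check that only their \emph{boundedness}---not any concentration---is what the ``each action is eventually reselected'' argument uses; this is exactly the point already handled in the UCT/POMCP analyses we invoke. The secondary care is the bookkeeping with the conditional Borel--Cantelli lemma: first passing from ``$F_n$ infinitely often'' to divergence of $\sum_n\probm(F_n\mid\mathcal{F}_{n-1})$, and then, after multiplying through by the uniformly positive probability $\prob(h,hao)$ of also sampling $o$, back to ``$hao$ visited infinitely often''.
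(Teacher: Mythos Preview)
Your proof is correct and follows essentially the same approach as the paper's: an induction on depth (the paper phrases it as a minimal-length counterexample), using that UCB selects every action infinitely often at an infinitely-visited node, and that the subsequent observation matches $o$ with positive probability. The paper simply cites Theorem~4 of~\cite{KS06} for the UCB step and asserts the positive-observation-probability step without justification; you unpack both, giving the explicit ``bounded values versus unbounded bonus'' argument for the former and the belief-consistency point (ii) together with the conditional Borel--Cantelli lemma for the latter, which makes the passage from ``$a$ selected infinitely often'' to ``$hao$ visited infinitely often'' rigorous rather than heuristic.
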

\begin{proof}
Assume, for the sake of contradiction, that there exists a node $\varh$ of 
$\mathcal{T}_{\pomdp}$ that is visited only in finitely many calls with 
positive probability. Let $\varh$ be such a node of minimal length. It cannot 
be that $\varh=\hroot$, since $\hroot$ is visited in each call of 
$\texttt{Simulate}$ and when $\timeout=\infty$, there are infinitely many such 
calls. So $\varh=hao$ for some $a,o$. Due to our assumptions, $h$ is visited in 
infinitely many calls of $\texttt{Simulate}$. This means that $h$ is eventually 
added to $\trsearch$. Now assume that $a$ is selected on line~\ref{alg:line1} 
infinitely often with probability $1$. Since there is a positive probability of 
observing $o$ after 
selecting $a$ for history $h$, this would been that $\varh$ is also visited 
infinitely often with probability 1, a contradiction. But the fact that $a$ is 
selected infinitely often with probability $1$ stems from the fact that $a$ is 
sampled according to POMCP simulations. POMCP is essentially the UCT algorithm 
applied to the history tree of a POMDP, and UCT, when run indefinitely, 
explores each node of the tree infinitely often (Theorem~4 in~\cite{KS06}).

\end{proof}

We proceed with the following lemma.

\begin{lemma}
\label{lem:treeconv}
Fix a concrete call of procedure $\texttt{Search}$, and let $\hroot$ be the 
root of $\trsearch$ and $\trexpl$ in this call. Then, as $\timeout \rightarrow 
\infty$, the probability that $\trexpl$ becomes equal to 
$\trsafe(\hroot)$ before $\timeout$ expires converges $1$.
\end{lemma}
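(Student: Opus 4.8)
The plan is to prove $\trexpl = \trsafe(\hroot)$ by showing that $\trexpl \subseteq \trsafe(\hroot)$ is an invariant of the whole \texttt{Search} call, while the reverse inclusion $\trsafe(\hroot)\subseteq\trexpl$ holds, under $\timeout=\infty$, after almost surely finitely many simulations; a routine stopping argument then upgrades this to the stated convergence as $\timeout\to\infty$. For the invariant: a node is inserted into $\trexpl$ only inside \texttt{UpdateTrees}$(h)$, which is called (from \texttt{Simulate} or \texttt{Rollout} at $\depth=0$) exactly when the payoff $\pf$ accumulated along the current run satisfies $\pf\geq\varthr$. Since rewards are observable, the payoff accumulated along a run is a deterministic function of the sequence of actions and observations it produces, i.e.\ of the history $h$ it reaches; hence $h$ is a length-$N$ history extending $\hroot$ whose payoff surpasses the current threshold, which is precisely a leaf of $\trsafe(\hroot)$. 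As \texttt{UpdateTrees}$(h)$ inserts only $h$ together with prefixes of $h$ extending $\hroot$ (recall $\trexpl$ is rooted at $\hroot$), and every such prefix also belongs to $\trsafe(\hroot)$ — it extends to the length-$N$ history $h$ of sufficient payoff — the inclusion $\trexpl\subseteq\trsafe(\hroot)$ is preserved; it is not destroyed by pruning, since $\trexpl$ is never pruned within a \texttt{Search} call.

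For the reverse inclusion, note that $\trsafe(\hroot)$ is a sub-tree of the finite history tree $\mathcal{T}_{\pomdp}$, hence finite; that every leaf of $\trsafe(\hroot)$ has length $N$ (a shorter node of $\trsafe(\hroot)$ has a child in $\trsafe(\hroot)$); and that $\trsafe(\hroot)$ is exactly the set of prefixes (extending $\hroot$) of its finitely many leaves. Assume $\timeout=\infty$. By Lemma~\ref{lem:inf-exploration}, with probability $1$ every node of the sub-tree of $\mathcal{T}_{\pomdp}$ rooted at $\hroot$ is visited by \texttt{Simulate} (indeed infinitely often); in particular each leaf $h$ of $\trsafe(\hroot)$ is eventually reached by some run, and at that moment ($\depth=0$) the payoff test fires because $h\in\trsafe(\hroot)$, so \texttt{UpdateTrees}$(h)$ puts $h$ and all its still-missing prefixes into $\trexpl$. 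Once this has happened for each of the finitely many leaves of $\trsafe(\hroot)$ — which occurs after some almost surely finite number $T$ of \texttt{Simulate} calls — every node of $\trsafe(\hroot)$ lies in $\trexpl$, so by the invariant $\trexpl=\trsafe(\hroot)$; and equality persists, since $\trexpl$ only grows during the \texttt{Search} call.

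It remains to transfer this to finite timeouts. A finite-$\timeout$ execution of \texttt{Search} is exactly an infinite-$\timeout$ execution truncated after some number $K(\timeout)$ of \texttt{Simulate} calls; and since each top-level \texttt{Simulate} call performs an amount of work bounded by a constant $c$ depending only on $N$, $|\act|$, $|\obs|$, $|\states|$ (recursion depth $\leq N$, and \texttt{Rollout} and \texttt{UpdateTrees} likewise do bounded work, independently of how many simulations have been run), we have $K(\timeout)\geq\lfloor\timeout/c\rfloor$. Hence $\probm(\trexpl=\trsafe(\hroot)\text{ before }\timeout\text{ expires})=\probm(T\leq K(\timeout))\geq\probm(T\leq\lfloor\timeout/c\rfloor)\longrightarrow\probm(T<\infty)=1$ as $\timeout\to\infty$, which is the claim. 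I expect the main work to lie in the reverse-inclusion step: making precise that reaching a depth-$0$ node of $\trsafe(\hroot)$ genuinely triggers \texttt{UpdateTrees} with the correct threshold comparison (this is where reward observability enters, so that the compared quantity is a function of the generated history alone) and that \texttt{UpdateTrees} closes $\trexpl$ under prefixes down to $\hroot$; the probabilistic content is then immediate from Lemma~\ref{lem:inf-exploration} together with the uniform per-call time bound.
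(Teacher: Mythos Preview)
Your proof is correct and follows essentially the same approach as the paper: reduce to the $\timeout=\infty$ case, invoke Lemma~\ref{lem:inf-exploration} to conclude that every length-$N$ safe history is eventually visited and hence added (together with its prefixes) to $\trexpl$ via \texttt{UpdateTrees}, and then transfer back to finite timeouts. Your version is more explicit than the paper's on two points the paper leaves implicit---the invariant $\trexpl\subseteq\trsafe(\hroot)$ (needed for equality rather than mere containment, and where reward observability enters) and the uniform per-call time bound underlying the stopping argument---but the overall structure is the same.
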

\begin{proof}
We prove a slightly different statement: if $\timeout=\infty$, then the 
probability that $\trexpl$ eventually becomes equal to 
$\trsafe(\hroot)$ is 1. Clearly, this entails the lemma, since 
\begin{align*}
&\probm(\trexpl \text{ becomes equal to }
\trsafe(\hroot)) \\ &= \sum_{i=0}^{\infty}\probm(\trexpl \text{ becomes equal 
to 
} 
\trsafe(\hroot) \\ &~~~~~~~~~~~~~~~~\text{ in $i$-th call of \texttt{Simulate} 
}).
\end{align*}
(Here, $\probm$ denotes the probability measure over executions of our 
randomized algorithm).

So let $\timeout = 
\infty$.
From Lemma~\ref{lem:inf-exploration} it follows that with probability $1$, each 
node of the history tree is visited infinitely often. In particular, each node
representing history $h$ of length  
$N$ such that $\discpath{\varh_{-\len{\hroot}}}{\discount} \geq \thr$ is 
visited, with probability 1, in at least one call of procedure
$\texttt{Simulate}$. Hence , with probability 1, this node and all its 
predecessors, are 
added to $\trexpl$ during the sub-call $\texttt{UpdateTrees(h)}$. 
\end{proof}

Lemma~\ref{lem:search-convergence} then follows from the previous and the 
following lemma.

\newcommand{\genrisk}{\Psi}

\begin{lemma}
Assume that during that during some call of $\texttt{Simulate}$ it happens that 
$\trexpl$ becomes equal to $\trsafe(\hroot)$. Then at this point it holds that 
$\hroot.U = 
\inf_{\sigma}\probm^{\sigma}_{b_{h_{\mathit{root}}}}(\discpath{}{\discount,N-\len{h_{\mathit{root}}}}
< \varthr)$, and $\hroot.U$ will not change any further. 
\end{lemma}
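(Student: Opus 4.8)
The plan is to prove the two inequalities $\hroot.U \ge \inf_{\sigma}\probm^{\sigma}_{b_{\hroot}}(\discpath{}{\discount,N-\len{\hroot}}<\varthr)$ and $\hroot.U \le \inf_{\sigma}\probm^{\sigma}_{b_{\hroot}}(\discpath{}{\discount,N-\len{\hroot}}<\varthr)$ separately, and then observe that after this point the quantities on which $\hroot.U$ depends no longer change. The first inequality is immediate from Lemma~\ref{lem:search-convergence}(1): that lemma yields a policy $\sigma$ with $\probm^{\sigma}_{b_{\hroot}}(\discpath{}{\discount,N-\len{\hroot}}<\varthr)\le\hroot.U$, so the infimum over all policies is at most $\hroot.U$. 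Hence the real content is the reverse direction: I must show that, once $\trexpl$ coincides with $\trsafe(\hroot)$, the attributes $U$ produced by the dynamic programming in \texttt{UpdateTrees} are not merely upper bounds on the optimal risk but are \emph{exactly} equal to it, i.e.\ $\hroot.U\le\probm^{\sigma}_{b_{\hroot}}(\discpath{}{\discount,N-\len{\hroot}}<\varthr)$ for every policy $\sigma$.

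To this end I would prove, by induction on $N-\len{\varh}$ (i.e.\ bottom-up in the tree), the following statement, valid under the hypothesis $\trexpl=\trsafe(\hroot)$: writing $L=\len{\hroot}$, for every history $\varh$ extending $\hroot$ with $\len{\varh}\le N$ and for every policy $\sigma$,
\[
\probm^{\sigma}_{b_{\varh}}\!\left(\discpath{}{\discount,N-\len{\varh}} < \frac{\varthr - \discpath{\varh_{-L}}{\discount}}{\discount^{\len{\varh}-L}}\right)\ \ge\ \varh.U ,
\]
where for $\varh\notin\trexpl$ the attribute $\varh.U$ takes its default value $1$. Instantiating this with $\varh=\hroot$ — for which $\discpath{\varh_{-L}}{\discount}=0$ and the fraction equals $\varthr$ — gives $\hroot.U\le\inf_{\sigma}\probm^{\sigma}_{b_{\hroot}}(\discpath{}{\discount,N-\len{\hroot}}<\varthr)$, and together with the first paragraph this yields the claimed equality.

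The induction has three cases. If $\varh\notin\trsafe(\hroot)$, then by definition of $\trsafe$ no realizable length-$N$ extension of $\varh$ accumulates discounted payoff at least $\varthr$; since the rewards are observable, the payoff of a play is a function of its history, so under \emph{any} policy every play consistent with $\varh$ lies strictly below the threshold at $\varh$ almost surely, and the risk equals $1=\varh.U$. If $\len{\varh}=N$ and $\varh\in\trsafe(\hroot)$, then $\discpath{\varh_{-L}}{\discount}\ge\varthr$, the threshold at $\varh$ is $\le 0$, and the zero-horizon payoff of $0$ is not below it, so the risk is $0=\varh.U$ (matching the assignment ${h}.U\leftarrow 0$ in \texttt{UpdateTrees}). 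If $\len{\varh}<N$ and $\varh\in\trsafe(\hroot)$, I would decompose the risk over the first action and observation: for any $\sigma$ the probability in question equals $\sum_{a}\sigma(a\mid\varh)\sum_{o}\prob(\varh,\varh a o)\cdot\probm^{\sigma}_{b_{\varh a o}}(\cdots)$, where the inner probabilities refer to the corresponding events at the children $\varh a o$, using that $\prob(\varh,\varh a o)$ depends only on $b_{\varh}$ and that, by observability of rewards, the accumulated reward — hence the threshold at $\varh a o$ — is determined by $\varh,a,o$. Applying the induction hypothesis (or, for children $\varh a o\notin\trexpl$, the already-settled first case) to each child, and noting those children contribute $1-(\varh a o).U=0$ to the relevant sum, the update on lines~\ref{algoline:dp1}--\ref{algoline:dp2} reads $\varh.U_a = 1-\sum_{o}\prob(\varh,\varh a o)\,(1-(\varh a o).U)=\sum_{o}\prob(\varh,\varh a o)\,(\varh a o).U$; therefore the probability is $\ge\sum_{a}\sigma(a\mid\varh)\,\varh.U_a\ge\min_{a}\varh.U_a=\varh.U$.

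Finally, for the statement that $\hroot.U$ does not change further: within the current search phase $\varthr$ and $\hroot$ are fixed (they are modified only inside \texttt{PlayAction}), so $\trsafe(\hroot)$ is fixed and, once reached, $\trexpl$ stays equal to it. Every subsequent call $\texttt{UpdateTrees}(h)$ is triggered by a length-$(N-L)$ simulation from $\hroot$ surpassing the threshold, so $h$ and all its prefixes already belong to $\trsafe(\hroot)=\trexpl$; hence no node is added, and the bottom-up recomputation on lines~\ref{alg:line2}--\ref{alg:line3} re-derives exactly the same $U$-values (depth-$N$ leaves are again set to $0$, internal nodes are recomputed from unchanged children). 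Thus $\hroot.U$ is unchanged. I expect the third case of the induction to be the main obstacle — carefully matching the algorithm's dynamic-programming update (with its bookkeeping of the shifted threshold and its convention that missing children carry risk $1$) to the exact Bellman recursion for risk minimization — together with making the first case rigorous via the observable-rewards assumption; everything else is routine bookkeeping.
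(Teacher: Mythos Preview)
Your proposal is correct and follows essentially the same approach as the paper: a backward (bottom-up) induction on the tree showing that the dynamic-programming update in \texttt{UpdateTrees} coincides with the Bellman recursion for minimal risk, using crucially that histories outside $\trsafe(\hroot)$ have risk $1$. The only organizational difference is that the paper proves the equality $h.U=\genrisk(h)$ directly in one pass (rather than splitting into two inequalities) and handles the ``no further change'' clause via monotonicity of the $U$-attributes combined with Lemma~\ref{lem:search-convergence}(1), whereas you argue directly that subsequent calls to \texttt{UpdateTrees} recompute the same values; both are fine.
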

\begin{proof}
	Let $L=\len{\hroot}$.
	We again prove a somewhat stronger statement: given assumptions of the 
	lemma, for each $h\in \trexpl = \trsafe(\hroot)$ it holds that $h.U = 
	\inf_{\sigma}\probm^{\sigma}_{b_{h}}(\discpath{}{\discount,N-\len{h}}
	< \varthr -  \discpath{h_{-L}}{\discount})/\gamma^{\len{h}-N})$. Since an 
	easy induction shows that $U$-attributes can never increase, from part (1.) 
	of Lemma~\ref{lem:search-convergence} we get that 
	once this happens, the $U$ attributes of all nodes in $\trexpl$ now 
	represent the minimal achievable risk at given thresholds and thus these 
	attributes can never 
	change again.
	
	Denote 
	$\genrisk(h)=\inf_{\sigma}\probm^{\sigma}_{b_{h}}(\discpath{}{\discount,N-\len{h}}
	< \varthr -  \discpath{h_{-L}}{\discount})/\gamma^{\len{h}-N})$. We proceed 
	by backward induction on the depth of $h$. Clearly, for each leaf $h$ of 
	$\trsafe(\hroot)$ it holds $\genrisk=0$, so the statement holds. Now let 
	$h$ be any internal node of $\trsafe(\hroot)$. We have
\begin{align}
\genrisk(h)&=\min_{a\in \act}\Big(1-\sum_{o} \prob(h,hao) 
\cdot(1-\genrisk(hao))\Big) \nonumber\\  
&= \min_{a\in \act}\Big(1-\sum_{(h,hao)\in \trsafe(\hroot)} \prob(h,hao) 
\cdot(1-\genrisk(hao))\Big)\label{eq:1}\\
&= \min_{a\in \act}\Big(1-\sum_{(h,hao) \in \trexpl} \prob(h,hao) 
\cdot(1-\genrisk(hao))\Big)\label{eq:2}\\
&= \min_{a\in \act}\Big(1-\sum_{(h,hao) \in \trexpl} 
\prob(h,hao)(1-hao.U)\Big)\label{eq:3},
\end{align}

where individual equations are justified as follows: \eqref{eq:1} follows from 
the fact that $\prob(h,hao)=0 $ for each $a,o$ s.t. 
$(h,hao)\not\in\tree_{\pomdp}$ and $\genrisk(hao)=1$ for each $a,o$ s.t. $hao 
\not\in \trsafe(\hroot)$; \eqref{eq:2} follows from the fact that 
$\trsafe(\hroot)=\trexpl$; and \eqref{eq:3} follows from induction hypothesis. 
But during the call of procedure $\texttt{UpdateTrees}$ in which the last 
leaf-descendant of $h$ is added to $\trexpl$, the value of 
expression~\eqref{eq:3} is assigned to $h.U$ via computation on  
lines~\ref{algoline:dp1}--\ref{algoline:dp2}. This finishes the proof.

\end{proof}

\section{Proof of Lemma~\ref{lem:strat-convergence}.} 

We re-use some notation from the previous proof.

Due to existence of a feasible solution, it holds 
$\inf_{\sigma}\probm^{\sigma}_{\initd}(\discpath{}{\discount,N}
< \thr) \leq \rbound$.
Due to Lemma~\ref{lem:treeconv} it suffices to prove that $d_{\pi}$ converges 
to the optimal distribution whenever $\trexpl$ becomes equal to $\trsafe$. In 
such a case, the constrained MDP $\mdp$ admits a feasible solution. Let 
$\sigma$ be the optimal constrained-MDP policy in the MDP $\mdp$ obtained from 
$\treeclos$. Since states of this MDP are histories, $\sigma$ can be viewed as 
a  policy in the original POMDP $\pomdp$. Since in $\mdp$ the policy $\sigma$ 
satisfies the constraint given by $\constr$, in $\pomdp$ the policy ensures 
that a history of length $N$ belonging to $\trexpl$ is visited with probability 
at least $1-\rbound$. But as shown in the proof of 
Lemma~\ref{lem:search-convergence} (1.) these are exactly histories of length 
$N$ for which the payoff is at least $\thr$. Hence, in $\pomdp$, the policy 
$\sigma$ satisfies $\probm^{\sigma}_{\initd}(\discpath{}{\discount,N}
< \thr)\leq \rbound $. Conversely, any policy in $\pomdp$ satisfying the 
above constraint induces a policy in $\mdp$ satisfying 
$\E^{\pi}[\discpath{}{\discount}^\constr] \geq 1-\rbound$, where 
$\discpath{}{\discount}^\constr$ is the discounted sum of penalties.

It remains to prove that optimal constrained payoff achievable in $\mdp$ 
converges to the optimal risk-constrained payoff achievable in $\pomdp$. Since 
rewards in $\mdp$ are in correspondence with rewards in $\trexpl$, it suffices 
to show that for each leaf $h$ of $\trexpl$ with $\len{h}<N$ and for each 
action $a$ the attribute $h.V_a$ converges with probability 1 to the optimal 
expected payoff for horizon $N-\len{h}$ achievable in $\pomdp$ after playing 
action $a$ from belief $b_h$. But since the $V_a$ attributes are updated solely 
by POMCP simulations, this follows (for a suitable exploration constant) from 
properties of POMCP (Theorem~1 in~\cite{SV:POMCP}).

\section{Proof of Theorem~\ref{thm:convergence}.} 

We proceed by induction on $N$. For $N=0$ the statement is trivial. So assume 
that $N>1$. Since we assume the existence of a feasible solution to the EOPG 
problem, from Lemma~\ref{lem:search-convergence} (2.) it follows that with 
probability converging to one the $U$-attribute of the root of $\trexpl$ 
eventually becomes equal to 
$\inf_{\sigma}\probm^{\sigma}_{\initd}(\discpath{}{\discount,N}
< \varthr) \leq \rbound$ (the last inequality following from the existence of a 
feasible solution). Then, $\epsilon.U$ becomes $\rbound$ with probability 
converging to $1$, and when this happen the constrained MDP associated to 
$\treeclos$ has a feasible solution.

We first prove that the probability of RAMCP returning a payoff $<\thr$ is at 
most $\rbound$. Let $d_\pi$ be the distribution on actions returned by the 
first call of procedure \texttt{PlayAction}, and let $\{d^a\}_{a\in \act}$ be 
the corresponding set of risk distributions. From induction hypothesis and from 
the way in which the variables $\varthr$ and $\varrboundr$ are updated it 
follows that for each action $a$ and observation $o$, the probability that 
RAMCP launched from initial belief $b_ao$ returns payoff smaller than 
$\thr(a,o)= (\thr - \reward(a,o))/\gamma$ converges to a number $\leq d^a(o)$. 
The probability that the whole call of PAMCP returns payoff smaller than $\thr$ 
is 
\begin{align*}
&\sum_{a\in\act} d_\pi(a)\cdot\sum_{o} \prob(\epsilon,ao) \cdot d^a(o).
\end{align*}

But the above expression exactly expresses the probability that $C>0$ under 
$\pi$ in $\mdp$, and due to the construction of $\mdp$ such a probability 
equals $\E^{\pi}[\discpath{}{\discount}^\constr]$. Since $\pi$ is a feasible 
solution of the constrained MDP $\mdp$, we have 
$\E^{\pi}[\discpath{}{\discount}^\constr]\leq \epsilon.U $, where the last 
quantity becomes $\rbound$ with probability converging to $1$.

It remains to argue about convergence to optimality w.r.t. expectation. We 
again proceed by induction. For $N=0$ the statement is again trivial. From 
Lemma~\ref{lem:strat-convergence} we know that the distribution $d_\pi$ 
converges to the distribution used in the first step by some optimal solution 
$\sigma$ to the EOPG problem. For each $a\in \act$, $o\in \obs$, let 
$\sigma_{ao}$ be the fragment of this policy on the sub-tree rooted in $ao$. 
From the proof of Lemma~\ref{lem:strat-convergence} we get even stronger 
statement: if the explicit tree $\trexpl$ eventually becomes equal to $\trsafe$ 
(which happens with probability 1), the policy $\sigma$, to whose first step 
the algorithm converges, is an optimal solution to the associated constrained 
MDP $\mdp$. From this it follows that $\probm^{\sigma}_{b_{ao}} 
(\discpath{}{\discount,N-1}
< (\thr -  \reward(a,o))/\gamma)\leq d^a(o)$. So the EOPG problem with initial 
belief $b_{ao}$ with threshold $\thr_{ao} = \thr -  \reward(a,o))/\gamma$ and 
risk bound $d^{a}(o)$ has a feasible solution, and by the induction hypothesis 
we have that RAMCP on such a problem converges to the optimal risk-constrained 
expected payoff $v_{ao} = 
\E^{\sigma_{ao}}_{b_{ao}}[\discpath{}{\discount,N-1}]$. Hence, using RAMCP from 
belief $\initd$ results in convergence to expected payoff 
$\sum_{a\in\act}d^{\pi}(a)\cdot 
\sum_{{o}\in\obs}\prob(\epsilon,ao)\cdot(\rewards(\epsilon,ao) + \discount\cdot 
v_{ao} = \E^{\sigma}_{\initd}[\discpath{}{\discount,N}]$. Since $\sigma$ is 
optimal solution of the original instance of the EOPG problem, the result 
follows.

\section{Proof of Theorem~\ref{thm:safety}.}
 
During execution of RAMCP, $\trexpl$ can only grow in size. This allows us to 
prove the theorem by induction on $N$. For $N=0$ this is trivial. Let $N>0$. 
If, after the first search phase, $\epsilon.U>\rbound$, then we choose action 
$a$ minimizing the $U$-attribute. From induction hypothesis it follows that the 
probability that RAMCP return payoff smaller than $\thr$ is at most 
$\sum_{o\in\obs}\prob(\epsilon,ao)\cdot ao.U = \epsilon.U_a = \epsilon.U$. If 
$\epsilon.U\leq \rbound$ after the first search phase, then the probability 
that RAMCP return payoff smaller than $\thr$ is at most
\begin{align*}
&\sum_{a\in\act}d_{\pi}(a)\cdot \sum_{o\in\obs}\prob(\epsilon,ao)\cdot ao.U \\
&\leq \sum_{a\in\act}d_{\pi}(a)\cdot \sum_{o\in\obs}\prob(\epsilon,ao)\cdot 
d^a(o)\\
&\leq \rbound,
\end{align*}
where the first inequality follows from the fact that in each sub-tree of 
$\trexpl$ rooted in some history $h$, the probability mass of histories 
surpassing the threshold is $\leq 1-h.U$, and the second inequality follows 
from the fact that $\pi$ is a feasible solution to the constrained MDP $\mdp$.

\section{Relationship to Constrained POMDPs}

In principle, the EOPG problem can be encoded directly as a constrained-POMDP 
problem, by using indicator random variable for the event of surpassing the 
payoff threshold. However, this has several issues: 
\begin{itemize}
	\item
In~\cite{PMPKGB15:constrained-POMDP}, the incurred constraint penalties are 
discounted, while in the EOPG we would need the indicator variable to be 
undiscounted. 
\item 
Formulating EOPG problem as C-POMDP (with undiscounted constraints) would 
require extending states with the reward accumulated in the past. We could 
either a) discretize space of payoffs (which might result in large increase of 
state space if high precision is required), or b) consider only those payoffs 
accumulated on histories of length $<=N$ (horizon), or c) directly extend the 
state space with histories of length $<=N,$ i.e. explore the history tree of 
the original POMDP. Since b) entails analysing histories of length $<=N$ (in 
the worst case, all histories of length $<=N$ might yield different payoff), 
and c) allows us to formulate the problem as a C-MDP, we go in the latter 
direction. Using the C-MDP formulation forms only a part of our approach, there 
are other fundamental components such as sampling a promising sub-tree using 
MCTS. 
\end{itemize}
Also, our algorithm for EOPG is conceptually different from those used in the 
C-POMDP literature and provides different features:
\begin{itemize}
\item
In~\cite{UH10:constrained-pomdp-online}, they have an offline pre-processing 
step using PBVI. Our tool uses on-line Monte-Carlo tree search with linear 
programming. In general, simulation techniques are known to provide better 
scalability than point-based methods. Also, their algorithm is deterministic 
and thus produces deterministic policies, which are generally sub-optimal when 
compared to randomized policies. Finally, the paper makes no claims regarding 
the convergence of the algorithm to an optimal deterministic policy.
\item
In~\cite{PMPKGB15:constrained-POMDP} they use approximate linear programming 
to obtain approximate solutions to the C-POMDP problem. As they mention in the 
paper, their approach may yield policies that violate the C-POMDP constraints. 
Our algorithm for EOPG problem is such that if it finds a feasible solution, 
the risk bound is guaranteed (Theorem~\ref{thm:safety}).
\end{itemize}

\end{document}